\newcommand{\rs}[1]{{_{\!\mathsmaller{#1}}}}
\newcommand\Tstrut{\rule{0pt}{2.0ex}}         
\newcommand{\bv}{v} 
\newcommand{\bx}{x} 
\newcommand{\bz}{z} 
\newcommand{\bl}{l} 
\newcommand{\bp}{p} 
\newcommand{\bq}{q} 
\newcommand{\mE}{\mathbb{E}}
\newcommand{\pdata}{p_{\text{data}}}
\DeclareMathOperator*{\argmax}{argmax}
\DeclareMathOperator*{\argmin}{argmin}
\newtheorem{lemma}{Lemma}
\theoremstyle{remark}
\title{Activation Maximization Generative Adversarial Nets}
\author{Zhiming Zhou, Han Cai\\
	Shanghai Jiao Tong University\\
	{heyohai,hcai}@apex.sjtu.edu.cn
	\And
	\hspace{-1pt}Shu Rong\\
	\hspace{-1pt}Yitu Tech\\
	\hspace{-1pt}shu.rong@yitu-inc.com\\
	\And
	\hspace{-1pt}Yuxuan Song, Kan Ren\\
	\hspace{-1pt}Shanghai Jiao Tong University\\
	\hspace{-1pt}{songyuxuan,kren}@apex.sjtu.edu.cn
	\And
	Jun Wang\\
	University College London\\
	j.wang@cs.ucl.ac.uk\\
	\And
	\hspace{-50pt}Weinan Zhang, Yu Yong \\
	\hspace{-50pt}Shanghai Jiao Tong University\\
	\hspace{-50pt}wnzhang@sjtu.edu.cn, yyu@apex.sjtu.edu.cn
}
\begin{document}
	
\maketitle

\begin{abstract}
	

Class labels have been empirically shown useful in improving the sample quality of generative adversarial nets (GANs). In this paper, we mathematically study the properties of the current variants of GANs that make use of class label information. With class aware gradient and cross-entropy decomposition, we reveal how class labels and associated losses influence GAN's training. Based on that, we propose Activation Maximization Generative Adversarial Networks (AM-GAN) as an advanced solution. Comprehensive experiments have been conducted to validate our analysis and evaluate the effectiveness of our solution, where AM-GAN outperforms other strong baselines and achieves state-of-the-art Inception Score (8.91) on CIFAR-10. In addition, we demonstrate that, with the Inception ImageNet classifier, Inception Score mainly tracks the diversity of the generator, and there is, however, no reliable evidence that it can reflect the true sample quality. We thus propose a new metric, called AM Score, to provide a more accurate estimation of the sample quality. Our proposed model also outperforms the baseline methods in the new metric.



\end{abstract}

\section{Introduction}\label{sec:intro}

Generative adversarial nets (GANs) \citep{GANgoodfellow2014} as a new way for learning generative models, has recently shown promising results in various challenging tasks, such as realistic image generation \citep{PPGNnguyen2016,StackGANzhang2016,gulrajani2017improved}, conditional image generation \citep{StackedGGANJohnhuang2016,cao2017unsupervised,image2imageisola2016}, image manipulation \citep{VisualManifoldzhu2016} and text generation \citep{SeqGANyu2016}.

Despite the great success, it is still challenging for the current GAN models to produce convincing samples when trained on datasets with high variability, even for image generation with low resolution, e.g., CIFAR-10. Meanwhile, people have empirically found taking advantages of class labels can significantly improve the sample quality. 

There are three typical GAN models that make use of the label information: CatGAN \citep{CatGANspringenberg2015} builds the discriminator as a multi-class classifier; 
LabelGAN \citep{labGANsalimans2016} extends the discriminator with one extra class for the generated samples; 
AC-GAN \citep{ACGANodena2016conditional} jointly trains the real-fake discriminator and an auxiliary classifier for the specific real classes. 
By taking the class labels into account, these GAN models show improved generation quality and stability. However, the mechanisms behind them have not been fully explored \citep{Tutorialgoodfellow2016}. 

In this paper, we mathematically study GAN models with the consideration of class labels. We derive the gradient of the generator's loss w.r.t. class logits in the discriminator, named as class-aware gradient, for LabelGAN \citep{labGANsalimans2016} and further show its gradient tends to guide each generated sample towards being one of the specific real classes. 
Moreover, we show that AC-GAN \citep{ACGANodena2016conditional} can be viewed as a GAN model with hierarchical class discriminator. Based on the analysis, we reveal some potential issues in the previous methods and accordingly propose a new method to resolve these issues. 

Specifically, we argue that a model with explicit target class would provide clearer gradient guidance to the generator than an implicit target class model like that in \citep{labGANsalimans2016}. Comparing with \citep{ACGANodena2016conditional}, we show that introducing the specific real class logits by replacing the overall real class logit in the discriminator usually works better than simply training an auxiliary classifier. We argue that, in \citep{ACGANodena2016conditional}, adversarial training is missing in the auxiliary classifier, which would make the model more likely to suffer mode collapse and produce low quality samples. We also experimentally find that predefined label tends to result in intra-class mode collapse and correspondingly propose dynamic labeling as a solution. The proposed model is named as Activation Maximization Generative Adversarial Networks (AM-GAN).
We empirically study the effectiveness of AM-GAN with a set of controlled experiments and the results are consistent with our analysis and, note that, AM-GAN achieves the state-of-the-art Inception Score (8.91) on CIFAR-10. 

In addition, through the experiments, we find the commonly used metric needs further investigation. In our paper, we conduct a further study on the widely-used evaluation metric Inception Score \citep{labGANsalimans2016} and its extended metrics. We show that, with the Inception Model, Inception Score mainly tracks the diversity of generator, while there is no reliable evidence that it can measure the true sample quality. We thus propose a new metric, called AM Score, to provide more accurate estimation on the sample quality as its compensation. In terms of AM Score, our proposed method also outperforms other strong baseline methods.


The rest of this paper is organized as follows. In Section \ref{sec: preliminaries}, we introduce the notations and formulate the LabelGAN \citep{labGANsalimans2016} and AC-GAN$^*$ \citep{ACGANodena2016conditional} as our baselines. We then derive the class-aware gradient for LabelGAN, in Section \ref{sec: implicit target class}, to reveal how class labels help its training. In Section \ref{sec: explicit target class}, we reveal the overlaid-gradient problem of LabelGAN and propose AM-GAN as a new solution, where we also analyze the properties of AM-GAN and build its connections to related work. 
In Section \ref{sec: extensions}, we introduce several important extensions, including the dynamic labeling as an alternative of predefined labeling (i.e., class condition), the activation maximization view and a technique for enhancing the AC-GAN$^{*}$. We study Inception Score in Section \ref{sec: metrics} and accordingly propose a new metric AM Score. 
In Section \ref{sec: exp}, we empirically study AM-GAN and compare it to the baseline models with different metrics. Finally we conclude the paper and discuss the future work in Section \ref{sec: conclusions}.

\section{Preliminaries}\label{sec: preliminaries}
In the original GAN formulation \citep{GANgoodfellow2014}, the loss functions of the generator $G$ and the discriminator $D$ are given as:
{\small
	\vspace{-4pt}
	\begin{equation}
	\begin{aligned}
		L^\text{ori}_G & = -\mE_{\bz \sim p_z(\bz)} [ \log D_r(G(\bz)) ] \triangleq -\mE_{\bx \sim G} [ \log D_r(\bx) ], \\
		L^\text{ori}_D 
		&= -\mE_{\bx \sim \pdata} [ \log D_r(\bx) ] - \mE_{\bx \sim G} [ \log (1 - D_r(\bx)) ] ,
	\end{aligned} \label{eq:gan}
	\end{equation}
}%
where $D$ performs binary classification between the real and the generated samples and $D_r(\bx)$ represents the probability of the sample $\bx$ coming from the real data.

\subsection{LabelGAN}\label{labelgan}

The framework (see Eq.~(\ref{eq:gan})) has been generalized to multi-class case where each sample $\bx$ has its associated class label $y\in \{1, {\ldots}, K, K{+}1 \}$, and the $K{+}1^{\text{th}}$ label corresponds to the generated samples \citep{labGANsalimans2016}. Its loss functions are defined as:
{\small
	\begin{align} 
	L^\text{lab}_{G} =& - \mE_{\bx \sim G} [ \log \textstyle\sum_{i=1}^{K}\!D_i(\bx)] \triangleq -\mE_{\bx \sim G} [ \log D_r(\bx) ], \\
	L^\text{lab}_{D} =& -\mE_{(\bx, y) \sim \pdata} [ \log D_y(\bx) ] - \mE_{\bx \sim G} [ \log D_{K{+}1}(\bx) ],
	\end{align}
}%
where $D_i(\bx)$ denotes the probability of the sample $\bx$ being class $i$. The loss can be written in the form of cross-entropy, which will simplify our later analysis:
{\small
	\begin{align} 
	\,\,\,\,\,\,\,\,\,\,\,\,\,\,\,\,\,\,\,\,\,\,\,
	L^\text{lab}_{G} 
	=& \,\,\mE_{\bx \sim G} [ H ( [1, 0], [D_r(\bx), D_{K{+}1}(\bx)] ) ], \label{eq:lab-gan-g} \\
	\,\,\,\,\,\,\,\,\,\,\,\,\,\,\,\,\,\,\,\,\,\,\,
	L^\text{lab}_{D} 
		=&\,\,\mE_{(\bx, y) \sim \pdata} [ H ( \bv(y), D(\bx) ) ] + \mE_{\bx \sim G} [ H ( \bv(K{+}1), D(\bx) ) ], \label{eq:lab-gan-d}
	\end{align}
}%
where $D(x)=[D_1(x), D_2(x), ..., D_{K{+}1}(x)]$ and $\bv(y) = [\bv_\rs{1}(y), \ldots, \bv_\rs{K{+}1}(y)]$ with $\bv_\rs{i}(y) = 0~\text{if}~i \neq y~\text{and}~\bv_\rs{i}(y) = 1~\text{if}~i = y$. $H$ is the cross-entropy, defined as $H(p, \bq) {=} {-}\sum_i p_\rs{\,i} \log{q_\rs{\,i}}$. We would refer the above model as LabelGAN (using class labels) throughout this paper.

\subsection{AC-GAN$^*$} \label{acgan}
Besides extending the original two-class discriminator as discussed in the above section, \citet{ACGANodena2016conditional} proposed an alternative approach, i.e., AC-GAN, to incorporate class label information, which introduces an auxiliary classifier $C$ for real classes in the original GAN framework. With the core idea unchanged, we define a variant of AC-GAN as the following, and refer it as AC-GAN$^*$:
{\small
	\begin{align}
	L^\text{ac}_{G}(\bx, y)
	=&\,\,\mE_{(\bx, y) \sim G} \big[ H \big( [1, 0], [D_r(\bx), D_{f}(\bx)] \big) \big] \label{eq:ac_gan_g_2}\\
	& + \mE_{(\bx, y) \sim G} \big[ H \big( u(y), C(\bx) \big) \big] , \label{eq:ac-gan-g} \\
	L^\text{ac}_{D}(\bx, y) 
	=&\,\,\mE_{(\bx, y) \sim \pdata} \big[ H \big( [1, 0], [D_r(\bx), D_{f}(\bx)]\big) \big] + \mE_{(\bx, y) \sim G} \big[ H \big( [0, 1], [D_r(\bx), D_{f}(\bx)] \big) \big] \label{eq:ac-gan-d-2}\\
	& + \mE_{(\bx, y) \sim \pdata} \big[ H \big( u(y), C(\bx) \big) \big] \label{eq:ac-gan-d},	
	\end{align}	
}%
where $D_r(\bx)$ and $D_{f}(\bx) = 1 - D_r(x)$ are outputs of the binary discriminator which are the same as vanilla GAN, $u(\cdot)$ is the vectorizing operator that is similar to $v(\cdot)$ but defined with $K$ classes, and $C(\bx)$ is the probability distribution over $K$ real classes given by the auxiliary classifier. 

In AC-GAN, each sample has a coupled target class $y$, and a loss on the auxiliary classifier w.r.t. $y$ is added to the generator to leverage the class label information. We refer the losses on the auxiliary classifier, i.e., Eq.~(\ref{eq:ac-gan-g}) and (\ref{eq:ac-gan-d}), as the auxiliary classifier losses. 

The above formulation is a modified version of the original AC-GAN. Specifically, we omit the auxiliary classifier loss $\mE_{(\bx,y)\sim G}[H(u(y),C(\bx))]$ which encourages the auxiliary classifier $C$ to classify the fake sample $x$ to its target class $y$. Further discussions are provided in Section~\ref{sec: ac-gan+}. Note that we also adopt the $-\log(D_r(x))$ loss in generator.

\section{Class-Aware Gradient}\label{sec: implicit target class}



In this section, we introduce the class-aware gradient, i.e., the gradient of the generator's loss w.r.t. class logits in the discriminator. By analyzing the class-aware gradient of LabelGAN, we find that the gradient tends to refine each sample towards being one of the classes, which sheds some light on how the class label information helps the generator to improve the generation quality. Before delving into the details, we first introduce the following lemma on the gradient properties of the cross-entropy loss to make our analysis clearer.



\begin{lemma}\label{lemma:ce_softmax_gradient}
	With $\bl$ being the logits vector and $\sigma$ being the softmax function, let $\sigma(\bl)$ be the current softmax probability distribution and $\hat{p}$ denote the target probability distribution, then
	\begin{equation} \small
	-\frac{\partial H \big( \hat{p}, \sigma(\bl) \big)}{\partial \bl} = \hat{p} - \sigma(\bl).
	\end{equation}
\end{lemma}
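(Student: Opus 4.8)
The plan is to expand the cross-entropy in terms of the logits, exploit the log-sum-exp form of the softmax to collapse the normalization term, and then differentiate coordinate-wise. First I would write $\sigma_i(\bl) = e^{\bl_i}/\sum_{j} e^{\bl_j}$, so that $\log \sigma_i(\bl) = \bl_i - \log \sum_j e^{\bl_j}$, and substitute into $H(\hat p, \sigma(\bl)) = -\sum_i \hat p_i \log \sigma_i(\bl)$. This gives
\begin{equation} \small
H\big(\hat p, \sigma(\bl)\big) = -\sum_i \hat p_i \bl_i + \Big(\sum_i \hat p_i\Big)\log \sum_j e^{\bl_j}.
\end{equation}

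The key simplification is that $\hat p$ is a probability distribution, so $\sum_i \hat p_i = 1$ and the expression reduces to $H(\hat p, \sigma(\bl)) = -\sum_i \hat p_i \bl_i + \log \sum_j e^{\bl_j}$. Differentiating with respect to a single coordinate $\bl_k$, the first term contributes $-\hat p_k$, and the log-sum-exp term contributes $e^{\bl_k}/\sum_j e^{\bl_j} = \sigma_k(\bl)$. Hence $\partial H/\partial \bl_k = \sigma_k(\bl) - \hat p_k$, and negating and assembling over all coordinates $k$ yields $-\partial H/\partial \bl = \hat p - \sigma(\bl)$, as claimed.

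The calculation is essentially routine; the only place requiring care is the use of $\sum_i \hat p_i = 1$ (without which a spurious multiple of $\nabla_{\bl}\log\sum_j e^{\bl_j}$ would survive), so I would state that normalization assumption explicitly. A minor secondary point is the standard derivative identity $\partial_{\bl_k} \log\sum_j e^{\bl_j} = \sigma_k(\bl)$, which I would either verify in one line or cite as the well-known softmax/log-partition gradient. No genuine obstacle is expected here.
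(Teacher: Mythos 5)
Your proposal is correct and follows essentially the same route as the paper's own proof: expand $\log\sigma_i(\bl)=\bl_i-\log\sum_j e^{\bl_j}$, use $\sum_i\hat p_i=1$ to collapse the normalization term, and differentiate coordinate-wise via the standard log-partition gradient. Your explicit flagging of the normalization step is a nice touch, but there is no substantive difference from the paper's argument.
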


For a generated sample $\bx$, the loss in LabelGAN is $L^\text{lab}_G(\bx) = H ( [1, 0], [D_{r}(\bx), D_{K{+}1}(\bx)] )$, as defined in Eq.~(\ref{eq:lab-gan-g}). With Lemma~\ref{lemma:ce_softmax_gradient}, the gradient of $L^\text{lab}_G(\bx)$ w.r.t. the logits vector $\bl(\bx)$ is given as:
{\small
	\begin{align}\label{eq:labGAN_gradient_logit}
		-\frac{\partial L^\text{lab}_{G}(\bx)}{\partial l_{k}(\bx)}
		&= -\frac{\partial H \big( [1, 0], [D_r(\bx), D_{K{+}1}(\bx)] \big)}{\partial l_{r}(\bx)} \frac{\partial l_{r}(\bx)}{\partial l_{k}(\bx)} 
		= \big(1 - D_{r}(\bx) \big)\frac{D_{k}(\bx)}{D_{r}(\bx)}, \quad k \in \{ 1, \ldots, K\},  \nonumber \\
		-\frac{\partial L^\text{lab}_{G}(\bx)}{\partial l_{K{+}1}(\bx)} 
		& = -\frac{\partial H \big( [1, 0], [D_r(\bx), D_{K{+}1}(\bx)] \big)}{\partial l_{K{+}1}(\bx)} 
		=  0 - D_{K{+}1}(\bx) = - \big( 1 - D_{r}(\bx)\big). 
	\end{align}
}%
With the above equations, the gradient of $L^\text{lab}_G(\bx)$ w.r.t. $\bx\,$ is:
{\small
	\begin{align}\label{eq:labGAN_gradient_x}
		&-\frac{\partial L^\text{lab}_{G}(\bx)}{\partial \bx} = \sum_{k = 1}^{K} -\frac{\partial L^\text{lab}_{G}(\bx)}{\partial l_k(\bx)} \frac{\partial l_k(\bx)}{\partial \bx} - \frac{\partial L^\text{lab}_{G}(\bx)}{\partial l_{K{+}1}(\bx)} \frac{\partial l_{K{+}1}(\bx)}{\partial \bx} \nonumber \\[1pt]
		&=  \big(1 - D_{r}(\bx)\big) \bigg(  \sum_{k = 1}^{K} \frac{D_{k}(\bx)}{D_{r}(\bx)}  \frac{\partial l_k(\bx)}{\partial \bx} -  \frac{\partial l_{K{+}1}(\bx)}{\partial \bx} \bigg)
		= \big(1 - D_{r}(\bx)\big) \sum_{k = 1}^{K{+}1} \alpha^\text{lab}_k(\bx) \frac{\partial l_k(\bx)}{\partial \bx},
	\end{align}
}%
where
{\small
	\begin{align}
		\alpha^\text{lab}_k(\bx)=
		\begin{cases}
			\frac{D_{k}(\bx)}{D_{r}(\bx)} & k \in \{1, \ldots, K\} \\[3pt]
			- 1 & k = K{+}1 \\
		\end{cases} .
	\end{align}
}%
From the formulation, we find that the overall gradient w.r.t. a generated example $\bx$ is $1{-}D_{r}(\bx)$, which is the same as that in vanilla GAN \citep{GANgoodfellow2014}. And the gradient on real classes is further distributed to each specific real class logit $l_k(\bx)$ according to its current probability ratio $\frac{D_{k}(\bx)}{D_{r}(\bx)}$. 

As such, the gradient naturally takes the label information into consideration: for a generated sample, higher probability of a certain class will lead to a larger step towards the direction of increasing the corresponding confidence for the class. Hence, individually, the gradient from the discriminator for each sample tends to refine it towards being one of the classes in a probabilistic sense.

That is, each sample in LabelGAN is optimized to be one of the real classes, rather than simply to be real as in the vanilla GAN. We thus regard LabelGAN as an implicit target class model. Refining each generated sample towards one of the specific classes would help improve the sample quality. 
Recall that there are similar inspirations in related work. 
\citet{LaplaceGANdenton2015} showed that the result could be significantly better if GAN is trained with separated classes. And AC-GAN \citep{ACGANodena2016conditional} introduces an extra loss that forces each sample to fit one class and achieves a better result.

\section{The Proposed Method} \label{sec: explicit target class} 


\begin{figure}[t]
	\centering
	\vspace{-0pt}
	\includegraphics[width=.45\columnwidth]{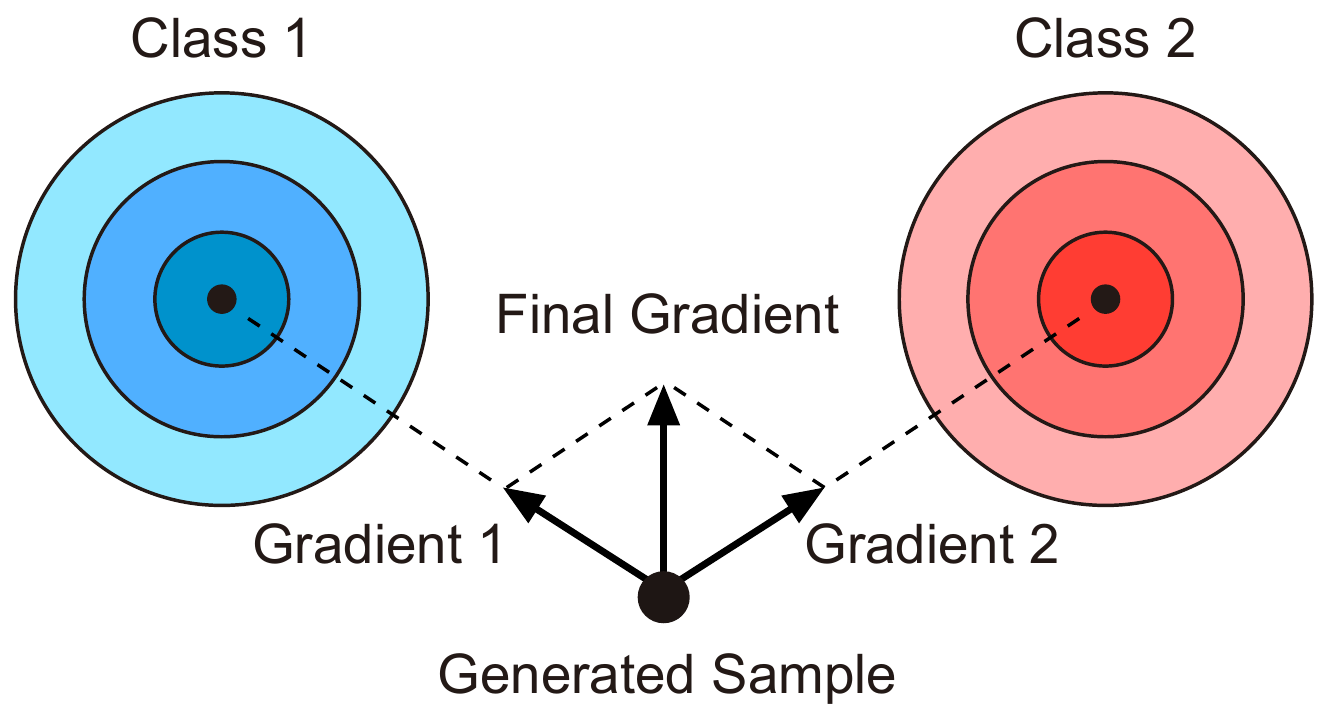}\\
	\caption{An illustration of the overlaid-gradient problem. When two or more classes are encouraged at the same time, the combined gradient may direct to none of these classes. It could be addressed by assigning each generated sample a specific target class instead of the overall real class.} \label{fig:labgan-overlap}
	\vspace{0pt}
\end{figure}


In LabelGAN, the generator gets its gradients from the $K$ specific real class logits in discriminator and tends to refine each sample towards being one of the classes.
However, LabelGAN actually suffers from the overlaid-gradient problem: all real class logits are encouraged at the same time. Though it tends to make each sample be one of these classes during the training, the gradient of each sample is a weighted averaging over multiple label predictors. As illustrated in Figure~\ref{fig:labgan-overlap}, the averaged gradient may be towards none of these classes. 

In multi-exclusive classes setting, each valid sample should only be classified to one of classes by the discriminator with high confidence. One way to resolve the above problem is to explicitly assign each generated sample a single specific class as its target. 

\subsection{AM-GAN} 

Assigning each sample a specific target class $y$, the loss functions of the revised-version LabelGAN can be formulated as:
{\small
	\begin{align} 
	\,\,\,\,\,\,\,\,\,\,\,\,\,\,\,\,\,\,\,\,\,\,
	L^\text{am}_{G} 
	=& \,\,\mE_{(\bx, y) \sim G} [ H ( \bv(y), D(\bx) ) ], \label{eq:am-gan-g} \\
	\,\,\,\,\,\,\,\,\,\,\,\,\,\,\,\,\,\,\,\,\,\, 
	L^\text{am}_{D} 
	=&\,\,\mE_{(\bx, y) \sim \pdata} [ H ( \bv(y), D(\bx) ) ] + \mE_{\bx \sim G} [ H ( \bv(K{+}1), D(\bx) ) ], \label{eq:am-gan-d} 
	\end{align}
}%
where $v(y)$ is with the same definition as in Section~\ref{labelgan}.
The model with aforementioned formulation is named as Activation Maximization Generative Adversarial Networks (AM-GAN) in our paper. And the further interpretation towards naming will be in Section \ref{sec: am}. The only difference between AM-GAN and LabelGAN lies in the generator's loss function. Each sample in AM-GAN has a specific target class, which resolves the overlaid-gradient problem.


AC-GAN \citep{ACGANodena2016conditional} also assigns each sample a specific target class, but we will show that the AM-GAN and AC-GAN are substantially different in the following part of this section.


\subsection{LabelGAN + Auxiliary Classifier}

Both LabelGAN and AM-GAN are GAN models with $K{+}1$ classes. We introduce the following cross-entropy decomposition lemma to build their connections to GAN models with two classes and the $K$-classes models (i.e., the auxiliary classifiers).

\begin{lemma}\label{lemma:ce_decompose}
	Given $\bv = [v_\rs{1}, \ldots, v_\rs{K{+}1}]$, $\bv_\rs{1:K} \triangleq [v_\rs{1}, \ldots, v_\rs{K}]$, $v_\rs{r} \triangleq \sum_{k=1}^K v_\rs{k}$, $R(\bv) \triangleq \bv_\rs{1:K} / v_\rs{r}$ and $F(\bv) \triangleq [v_\rs{r}, v_\rs{K{+}1}]$, let $\hat{\bp} = [\hat{p}_\rs{1}, \ldots, \hat{p}_\rs{K{+}1}]$, $\bp = [p_\rs{1}, \ldots, p_\rs{K{+}1}]$, then we have
	\begin{equation}
	H\big(\hat{\bp}, \bp\big) = \hat{p}_r H \big(R(\hat{\bp}), R(\bp) \big) + H\big(F(\hat{\bp}), F(\bp)\big).
	\end{equation}
\end{lemma}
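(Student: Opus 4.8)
The plan is a direct expansion of both sides using the definition $H(p,\bq) = -\sum_i p_\rs{i} \log q_\rs{i}$, with the only non-routine ingredient being the bookkeeping of the normalizing factor $v_\rs{r}$ appearing inside $R(\cdot)$. First I would write the left-hand side as
\begin{equation}
H\big(\hat{\bp},\bp\big) = -\sum_{k=1}^K \hat{p}_\rs{k}\log p_\rs{k} - \hat{p}_\rs{K{+}1}\log p_\rs{K{+}1},
\end{equation}
simply splitting the sum over $i \in \{1,\ldots,K{+}1\}$ into the first $K$ terms and the last term. The target is to recover exactly these two pieces from the right-hand side.

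Next I would expand the two right-hand-side terms separately. For the $F$ term, $F(\hat{\bp}) = [\hat{p}_\rs{r},\hat{p}_\rs{K{+}1}]$ and $F(\bp) = [p_\rs{r},p_\rs{K{+}1}]$, so $H\big(F(\hat{\bp}),F(\bp)\big) = -\hat{p}_\rs{r}\log p_\rs{r} - \hat{p}_\rs{K{+}1}\log p_\rs{K{+}1}$, which already supplies the $\hat{p}_\rs{K{+}1}\log p_\rs{K{+}1}$ piece plus a spurious $-\hat{p}_\rs{r}\log p_\rs{r}$. For the $R$ term, using $R(\hat{\bp})_\rs{k} = \hat{p}_\rs{k}/\hat{p}_\rs{r}$, $R(\bp)_\rs{k} = p_\rs{k}/p_\rs{r}$, and $\log(p_\rs{k}/p_\rs{r}) = \log p_\rs{k} - \log p_\rs{r}$, I would compute
\begin{equation}
\hat{p}_\rs{r}\, H\big(R(\hat{\bp}),R(\bp)\big) = -\sum_{k=1}^K \hat{p}_\rs{k}\log p_\rs{k} + \Big(\sum_{k=1}^K \hat{p}_\rs{k}\Big)\log p_\rs{r} = -\sum_{k=1}^K \hat{p}_\rs{k}\log p_\rs{k} + \hat{p}_\rs{r}\log p_\rs{r},
\end{equation}
where the key identity is $\sum_{k=1}^K \hat{p}_\rs{k} = \hat{p}_\rs{r}$, which holds by the very definition of $v_\rs{r}$. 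Adding the two expansions, the $\pm\hat{p}_\rs{r}\log p_\rs{r}$ terms cancel and what remains is precisely the left-hand side displayed above.

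The only point needing care — and the closest thing to an obstacle — is the degenerate case $\hat{p}_\rs{r} = 0$ (equivalently $p_\rs{r}=0$), for which $R(\hat{\bp})$ or $R(\bp)$ is not literally defined. I would dispatch this with the usual convention $0\log 0 = 0$: when $\hat{p}_\rs{r}=0$ nonnegativity forces $\hat{p}_\rs{k}=0$ for all $k\le K$, so both the weighted term $\hat{p}_\rs{r} H(R(\hat{\bp}),R(\bp))$ and the sum $\sum_{k=1}^K \hat{p}_\rs{k}\log p_\rs{k}$ vanish and the identity degenerates correctly; the $p_\rs{r}=0$ case is handled by continuity in the same spirit. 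Everything else is elementary algebra, so no genuinely hard step is expected.
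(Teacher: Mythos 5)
Your proof is correct and follows essentially the same route as the paper's: both rest on splitting the sum over $\{1,\ldots,K{+}1\}$, writing $\log p_\rs{k} = \log(p_\rs{k}/p_\rs{r}) + \log p_\rs{r}$, and using $\sum_{k=1}^K \hat{p}_\rs{k} = \hat{p}_\rs{r}$ so the $\hat{p}_\rs{r}\log p_\rs{r}$ terms account for the $F$ piece (you merely work from the right-hand side toward the left rather than the reverse, and you add a careful treatment of the degenerate case $\hat{p}_\rs{r}=0$ that the paper omits).
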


With Lemma~\ref{lemma:ce_decompose}, the loss function of the generator in AM-GAN can be decomposed as follows:
{\small
	\begin{align} 
		L^\text{am}_G(\bx) &= H \big( \bv(\bx), D(\bx) \big)
		= \bv_\rs{r}(x) \,\,\, \cdot \underbrace{H \big( R\big(\bv(\bx)\big), R\big(D(\bx)\big) \big)}_{\text{Auxiliary Classifier G Loss}} + \,\,\, \underbrace{H\big( F \big(\bv(\bx)\big), F\big(D(\bx)\big) \big)}_{\text{LabelGAN G Loss}}. \label{eq: am_g_decompose} 
	\end{align}
}%
The second term of Eq. (\ref{eq: am_g_decompose}) actually equals to the loss function of the generator in LabelGAN:
{\small
	\begin{align}
	H\big( F\big(\bv(\bx)\big), F\big(D(\bx)\big) \big) &= H \big( \big[1, 0\big], \big[D_r(\bx), D_{K{+}1}(\bx) \big] \big)  = L^\text{lab}_G(\bx).
	\end{align}
}%
Similar analysis can be adapted to the first term and the discriminator. Note that $\bv_\rs{r}(x)$ equals to one. 
Interestingly, we find by decomposing the AM-GAN losses, AM-GAN can be viewed as a combination of LabelGAN and  auxiliary classifier (defined in Section \ref{acgan}). From the decomposition perspective, disparate to AM-GAN, AC-GAN is a combination of vanilla GAN and the auxiliary classifier. 


The auxiliary classifier loss in Eq. (\ref{eq: am_g_decompose}) can also be viewed as the cross-entropy version of generator loss in CatGAN: the generator of CatGAN directly optimizes entropy $H(R(D(x)))$ to make each sample have a high confidence of being one of the classes, while AM-GAN achieves this by the first term of its decomposed loss $H(R(\bv(\bx)), R(D(\bx)))$ in terms of cross-entropy with given target distribution. That is, the AM-GAN is the combination of the cross-entropy version of CatGAN  and LabelGAN. We extend the discussion between AM-GAN and CatGAN in the Appendix \ref{app: CatGAN and AM-GAN L_D}.

\subsection{Non-Hierarchical Model} \label{adversarial training} 

\begin{figure}[t]
	\centering
	\vspace{-0pt}
	\includegraphics[width=.65\columnwidth]{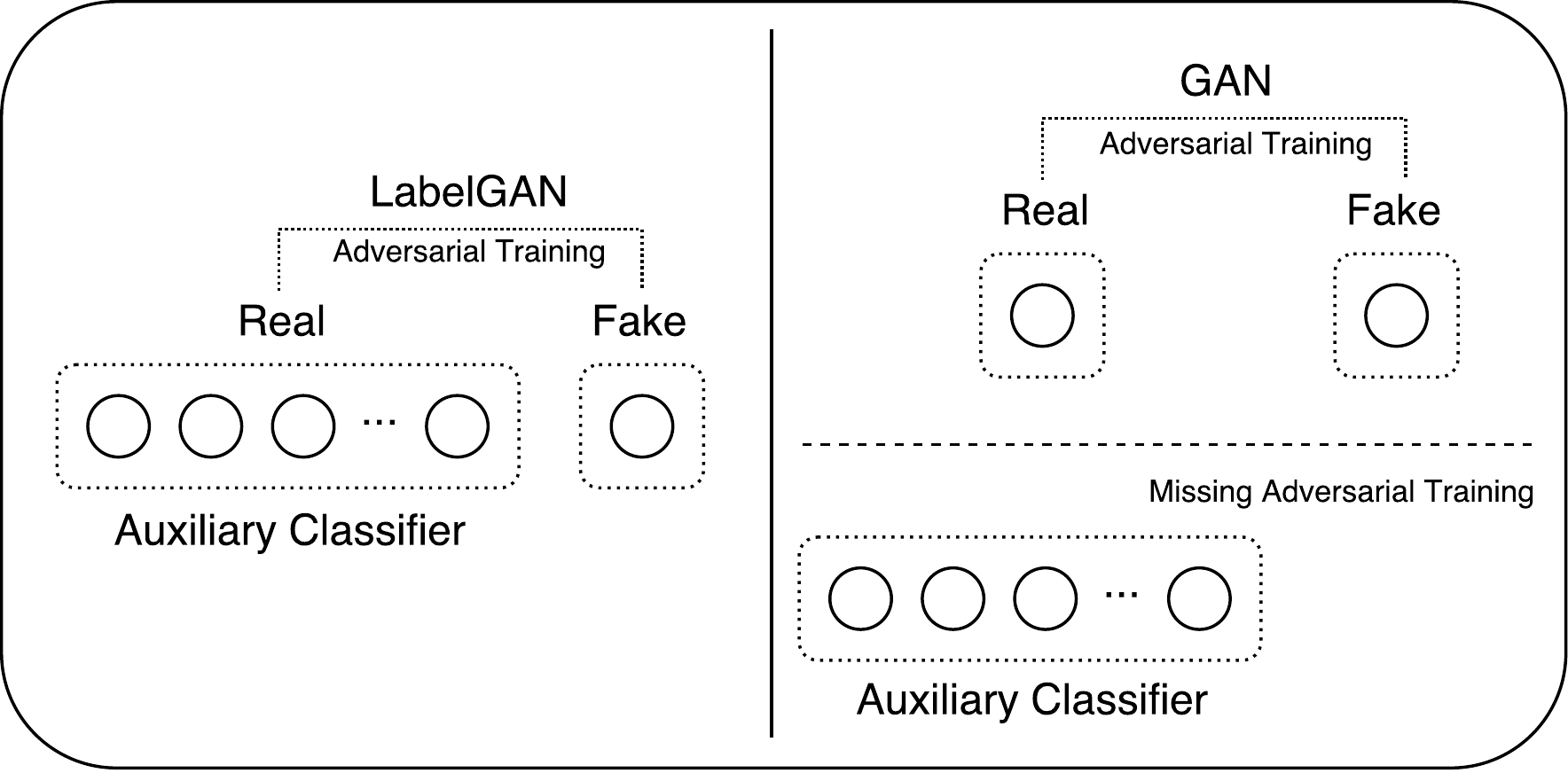}\\
	\vspace{-0pt}
	\caption{AM-GAN (left) v.s. AC-GAN$^*$ (right). AM-GAN can be viewed as a combination of LabelGAN and auxiliary classifier, while AC-GAN$^*$ is a combination of vanilla GAN and auxiliary classifier. 
	AM-GAN can naturally conduct adversarial training among all the classes, while in AC-GAN$^*$, adversarial training is only conducted at the real-fake level and missing in the auxiliary classifier.
	} \label{fig: am-gan}
	\vspace{-0pt}
\end{figure}

With the Lemma~\ref{lemma:ce_decompose}, we can also reformulate the AC-GAN$^*$ as a $K{+}1$ classes model. Take the generator's loss function as an example:
{\small
	\begin{align}
	L^\text{ac}_{G}(\bx, y)
	=&\,\,\mE_{(\bx, y) \sim G} \big[ H \big( [1, 0], [D_r(\bx), D_{f}(\bx)] \big) + H \big( u(y), C(\bx) \big) \big] \nonumber\\
	=&\,\,\mE_{(\bx, y) \sim G} \big[ H \big( v(y), [D_r(\bx) \cdot C(\bx), D_f(\bx)] \big) \big]. 
	\end{align}	
}%
In the $K{+}1$ classes model, the $K{+}1$ classes distribution is formulated as $[D_r(\bx)\cdot C(\bx), D_f(\bx)]$. AC-GAN introduces the auxiliary classifier in the consideration of leveraging the side information of class label, it turns out that the formulation of AC-GAN$^*$  can be viewed as a hierarchical $K{+}1$ classes model consists of a two-class discriminator and a $K$-class auxiliary classifier, as illustrated in Figure \ref{fig: am-gan}. Conversely, AM-GAN is a non-hierarchical model. All $K{+}1$ classes stay in the same level of the discriminator in AM-GAN.

In the hierarchical model AC-GAN$^*$, adversarial training is only conducted at the real-fake two-class level, while misses in the auxiliary classifier. Adversarial training is the key to the theoretical guarantee of global convergence $p_{\text{G}}=p_{\text{data}}$. Taking the original GAN formulation as an instance, if generated samples collapse to a certain point $x$, i.e., $p_{\text{G}}(x)>p_{\text{data}}(x)$, then there must exit another point $x'$ with $p_{\text{G}}(x')<p_{\text{data}}(x')$. Given the optimal $D(x)=\frac{p_{\text{data}}(x)}{p_{\text{G}}(x)+p_{\text{data}}(x)}$, the collapsed point $x$ will get a relatively lower score. And with the existence of higher score points (e.g. $x'$), maximizing the generator's expected score, in theory, has the strength to recover from the mode-collapsed state. In practice, the $p_{\text{G}}$ and $p_{\text{data}}$ are usually disjoint \citep{PrincipleGANarjovsky2017}, nevertheless,  the general behaviors stay the same: when samples collapse to a certain point, they are more likely to get a relatively lower score from the adversarial network. 

Without adversarial training in the auxiliary classifier, a mode-collapsed generator would not get any penalties from the auxiliary classifier loss. In our experiments, we find AC-GAN is more likely to get mode-collapsed, and it was empirically found reducing the weight (such as 0.1 used in \citet{gulrajani2017improved}) of the auxiliary classifier losses would help. In Section \ref{sec: ac-gan+}, we introduce an extra adversarial training in the auxiliary classifier with which we improve AC-GAN$^*$'s training stability and sample-quality in experiments. On the contrary, AM-GAN, as a non-hierarchical model, can naturally conduct adversarial training among all the class logits. 
 
\section{Extensions} \label{sec: extensions}

\subsection{Dynamic Labeling}

In the above section, we simply assume each generated sample has a target class. One possible solution is like AC-GAN \citep{ACGANodena2016conditional}, predefining each sample a class label, which substantially results in a conditional GAN. Actually, we could assign each sample a target class according to its current probability estimated by the discriminator. A natural choice could be the class which is of the maximal probability currently: $y(\bx) \triangleq  \argmax_{i \in \{1, \ldots, K\}} D_i(\bx)$ for each generated sample $x$. We name this dynamic labeling. 

According to our experiments, dynamic labeling brings important improvements to AM-GAN, and is applicable to other models that require target class for each generated sample, e.g. AC-GAN, as an alternative to predefined labeling. 

We experimentally find GAN models with pre-assigned class label tend to encounter intra-class mode collapse. In addition, with dynamic labeling, the GAN model remains generating from pure random noises, which has potential benefits, e.g. making smooth interpolation across classes in the latent space practicable.


\subsection{The Activation Maximization View} \label{sec: am}

Activation maximization is a technique which is traditionally applied to visualize the neuron(s) of pretrained neural networks \citep{AM2016synthesizing,PPGNnguyen2016,visualizing2009erhan}. 

The GAN training can be viewed as an Adversarial Activation Maximization Process. To be more specific, the generator is trained to perform activation maximization for each generated sample on the neuron that represents the log probability of its target class, while the discriminator is trained to distinguish generated samples and prevents them from getting their desired high activation.

It is worth mentioning that the sample that maximizes the activation of one neuron is not necessarily of high quality. Traditionally people introduce various priors to counter the phenomenon \citep{AM2016synthesizing,PPGNnguyen2016}. In GAN, the adversarial process of GAN training 
can detect unrealistic samples and thus ensures the high-activation is achieved by high-quality samples that strongly confuse the discriminator.

We thus name our model the Activation Maximization Generative Adversarial Network (AM-GAN).  

\subsection{AC-GAN$^{*{+}}$} \label{sec: ac-gan+}

Experimentally we find AC-GAN easily get mode collapsed and a relatively low weight for the auxiliary classifier term in the generator's loss function would help. In the Section \ref{adversarial training}, we attribute mode collapse to the miss of adversarial training in the auxiliary classifier. From the adversarial activation maximization view: without adversarial training, the auxiliary classifier loss that requires high activation on a certain class, cannot ensure the sample quality. 

That is, in AC-GAN, the vanilla GAN loss plays the role for ensuring sample quality and avoiding mode collapse. Here we introduce an extra loss to the auxiliary classifier in AC-GAN$^*$ to enforce adversarial training and experimentally find it consistently improve the performance:
{\small
	\begin{align}
	L^\text{ac+}_{D}(\bx, y) = \mE_{(\bx, y) \sim G} \big[ H \big( u(\cdot), C(\bx) \big) \big] , \label{eq:ac-gan-d+}	
	\end{align}	
}%
where $u(\cdot)$ represents the uniform distribution, which in spirit is the same as CatGAN \citep{CatGANspringenberg2015}. 

Recall that we omit the auxiliary classifier loss $\mE_{(\bx, y) \sim G} \big[ H \big( u(y)]$ in AC-GAN$^{*}$. According to our experiments, $\mE_{(\bx, y) \sim G} [ H ( u(y)]$ does improve AC-GAN$^*$'s stability and make it less likely to get mode collapse, but it also leads to a worse Inception Score. We will report the detailed results in Section \ref{sec: exp}. Our understanding on this phenomenon is that: by encouraging the auxiliary classifier also to classify fake samples to their target classes, it actually reduces the auxiliary classifier's  ability  on providing gradient guidance towards the real classes, and thus also alleviates the conflict between the GAN loss and the auxiliary classifier loss.


\section{Evaluation Metrics} \label{sec: metrics}

One of the difficulties in generative models is the evaluation methodology \citep{theis2015note}. In this section, we conduct both the mathematical and the empirical analysis on the widely-used evaluation metric Inception Score \citep{labGANsalimans2016} and other relevant metrics. We will show that Inception Score mainly works as a diversity measurement and we propose the AM Score as a compensation to Inception Score for estimating the generated sample quality.


\subsection{Inception Score}
As a recently proposed metric for evaluating the performance of generative models, Inception Score has been found well correlated with human evaluation \citep{labGANsalimans2016}, where a publicly-available Inception model $C$ pre-trained on ImageNet is introduced. By applying the Inception model to each generated sample $\bx$ and getting the corresponding class probability distribution $C(\bx)$, Inception Score is calculated via
{\small
	\vspace{-2pt}
	\begin{align}
	\text{Inception Score} = 
	\exp\big( \,\mE_{\bx} \big[ \text{KL} \big( C(\bx) \parallel \bar{C}^G \big)  \big] \,\big), 
	\end{align}	
}%
where $\mE_{\bx}$ is short of $\mE_{\bx \sim G}$ and $\bar{C}^G = \mE_{\bx}[C(\bx)]$ is the overall probability distribution of the generated samples over classes, which is judged by $C$, and KL denotes the Kullback-Leibler divergence. As proved in Appendix~\ref{inception score proof}, $\mE_{\bx} \big[ \text{KL} \big( C(\bx) \parallel \bar{C}^G \big)  \big]$ can be decomposed into two terms in entropy:
{\small
	\begin{align}
	 \mE_{\bx} \big[ \text{KL} ( C(\bx) \parallel \bar{C}^G ) \big] = H(\bar{C}^G) + (- \mE_{\bx}\big[H\big(C(\bx)\big)\big]).
	\end{align}	
}%

\subsection{The Properties of Inception Model} \label{Misunderstanding}
A common understanding of how Inception Score works lies in that a high score in the first term $H(\bar{C}^G)$ indicates the generated samples have high diversity (the overall class probability distribution evenly distributed), and a high score in the second term $-\mE_{\bx}[H(C(\bx))]$ indicates that each individual sample has high quality (each generated sample's class probability distribution is sharp, i.e., it can be classified into one of the real classes with high confidence) \citep{labGANsalimans2016}.  


\begin{figure}[t]
	\vspace{-0pt}
	\centering
	\begin{subfigure}{0.31\linewidth}
		\vspace{-0pt}
		\centering
		\includegraphics[width=0.90\columnwidth]{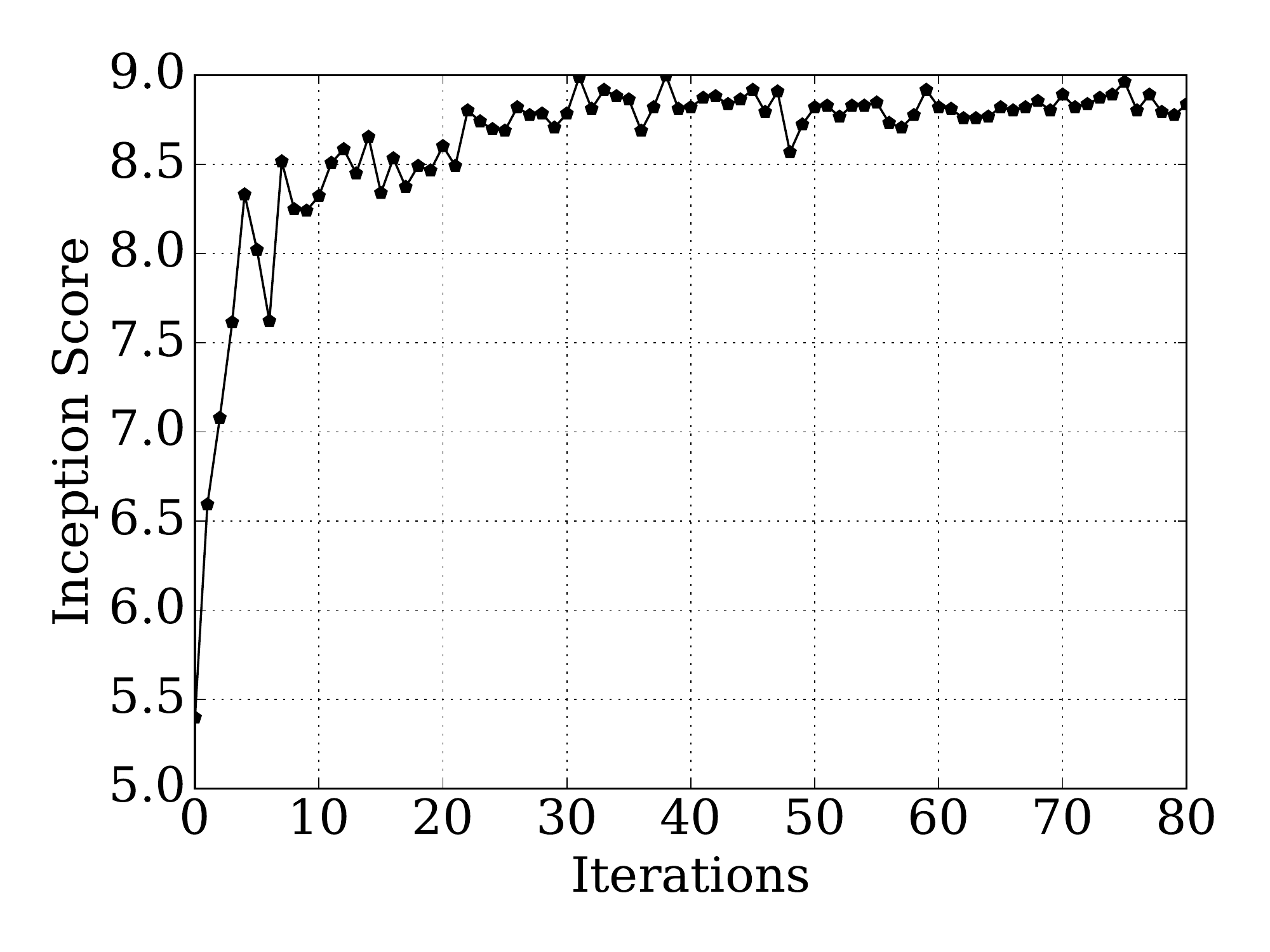}
		\vspace{-7pt}
		\caption{}
		\label{fig: icp_score_overall}
		\vspace{-0pt}
	\end{subfigure}
	\begin{subfigure}{0.31\linewidth}
		\vspace{-0pt}
		\centering
		\includegraphics[width=0.90\columnwidth]{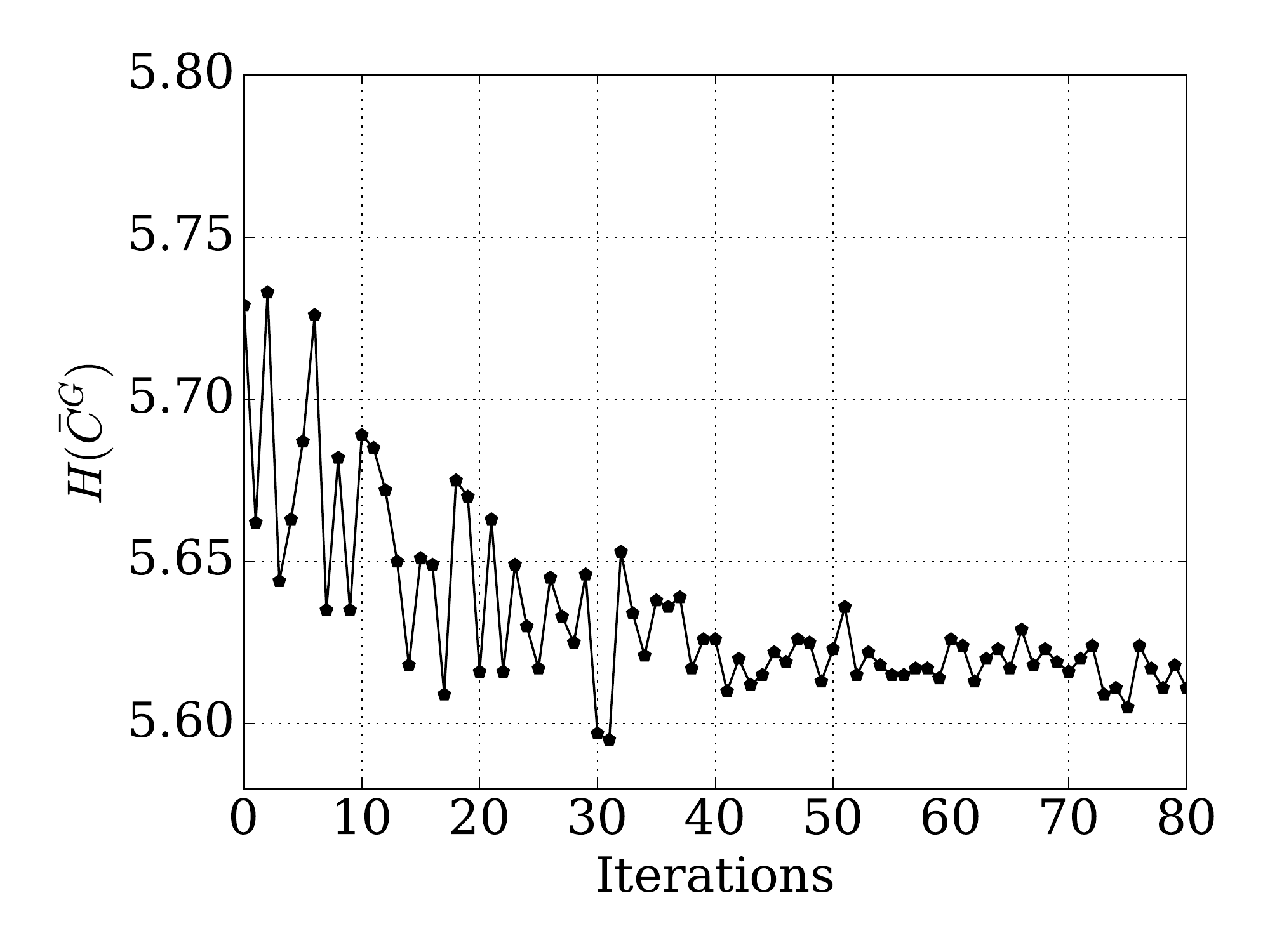}
		\vspace{-7pt}
		\caption{}
		\label{fig: icp_score_avg}
		\vspace{-0pt}
	\end{subfigure}
	\begin{subfigure}{0.31\linewidth}
		\vspace{-0pt}
		\centering
		\includegraphics[width=0.90\columnwidth]{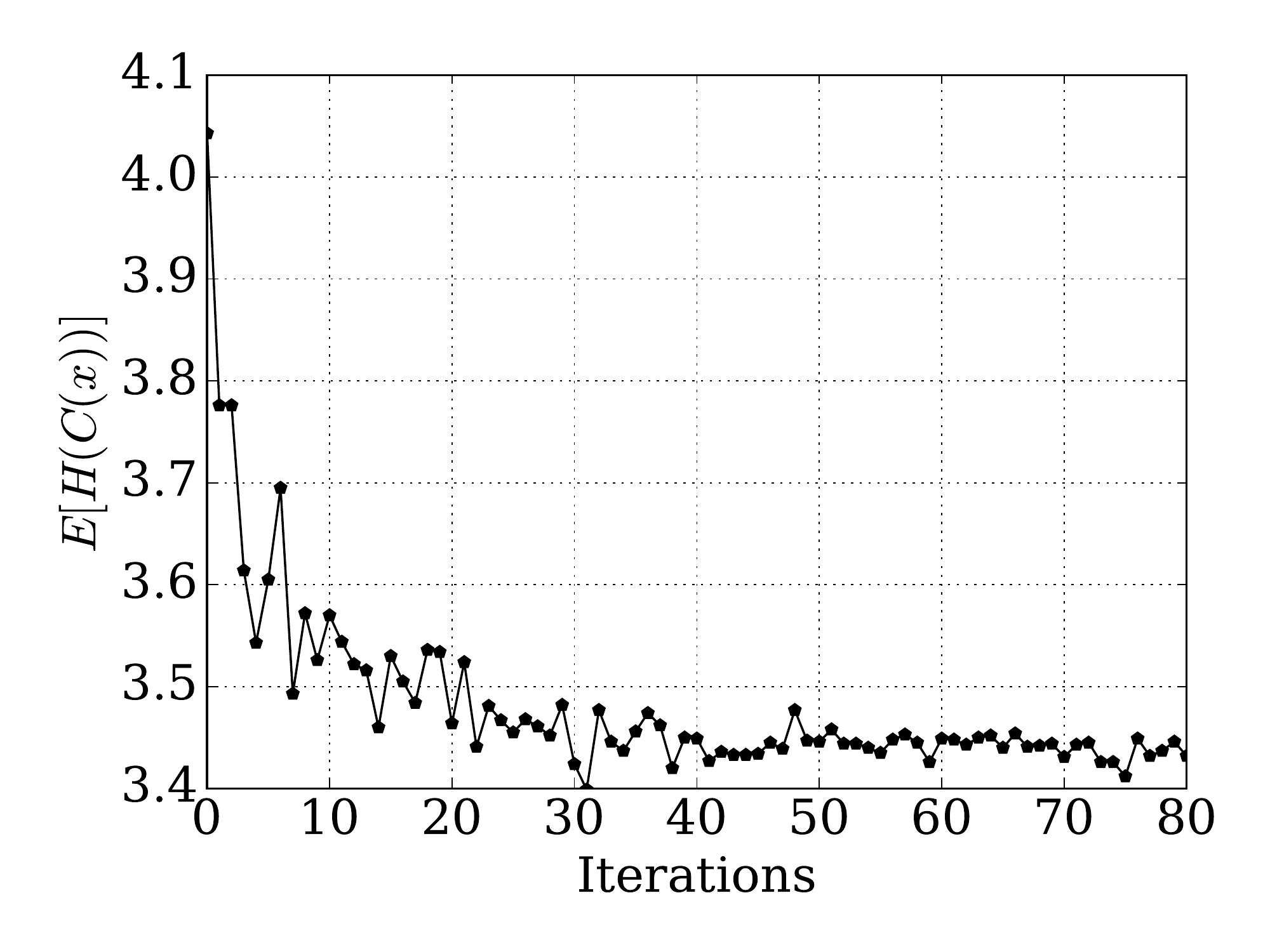}
		\vspace{-7pt}
		\caption{}
		\label{fig: icp_score_per}
		\vspace{-0pt}
	\end{subfigure}
	\vspace{-0pt}
	\caption{Training curves of Inception Score and its decomposed terms. a) Inception Score, i.e. $\exp(H(\bar{C}^G)-\mE_{\bx}[H(C(\bx))])$; b) $H(\bar{C}^G)$; c) $\mE_{\bx}[H(C(\bx))]$. A common understanding of Inception Score is that: the value of $H(\bar{C}^G)$ measures the diversity of generated samples and is expected to increase in the training process. However, it usually tends to decrease in practice as illustrated in (c).
    }
	\label{fig: icp_score}
\end{figure}

\begin{figure}[t]
	\vspace{3pt}
	\centering
	\begin{subfigure}{0.31\linewidth}
		\vspace{-0pt}
		\centering
		\includegraphics[width=0.90\columnwidth]{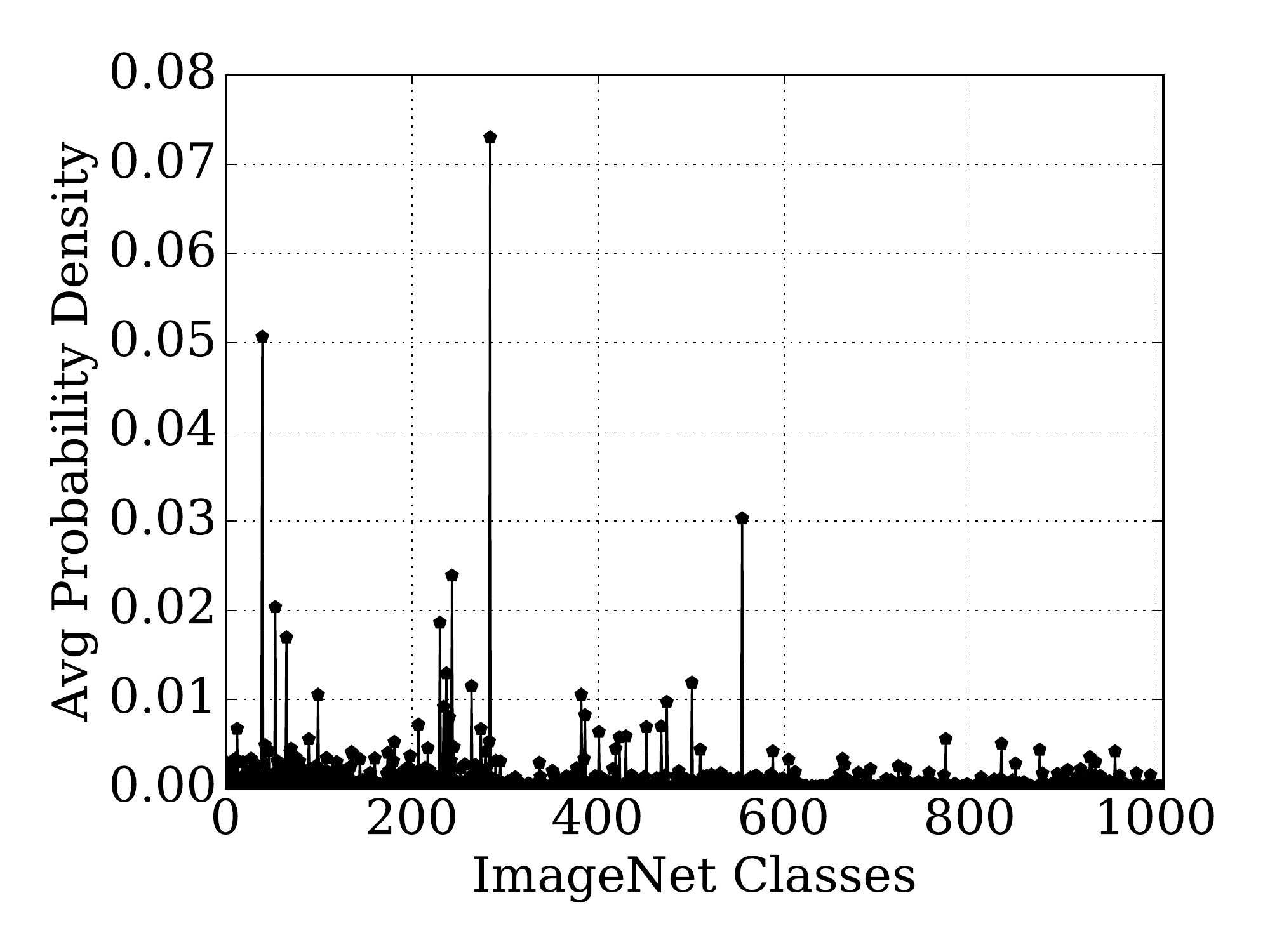}
		\vspace{-7pt}
		\caption{}
		\label{fig: cifar_distribution}
		\vspace{-0pt}
	\end{subfigure}
	\begin{subfigure}{0.31\linewidth}
		\vspace{-0pt}
		\centering
		\includegraphics[width=0.90\columnwidth]{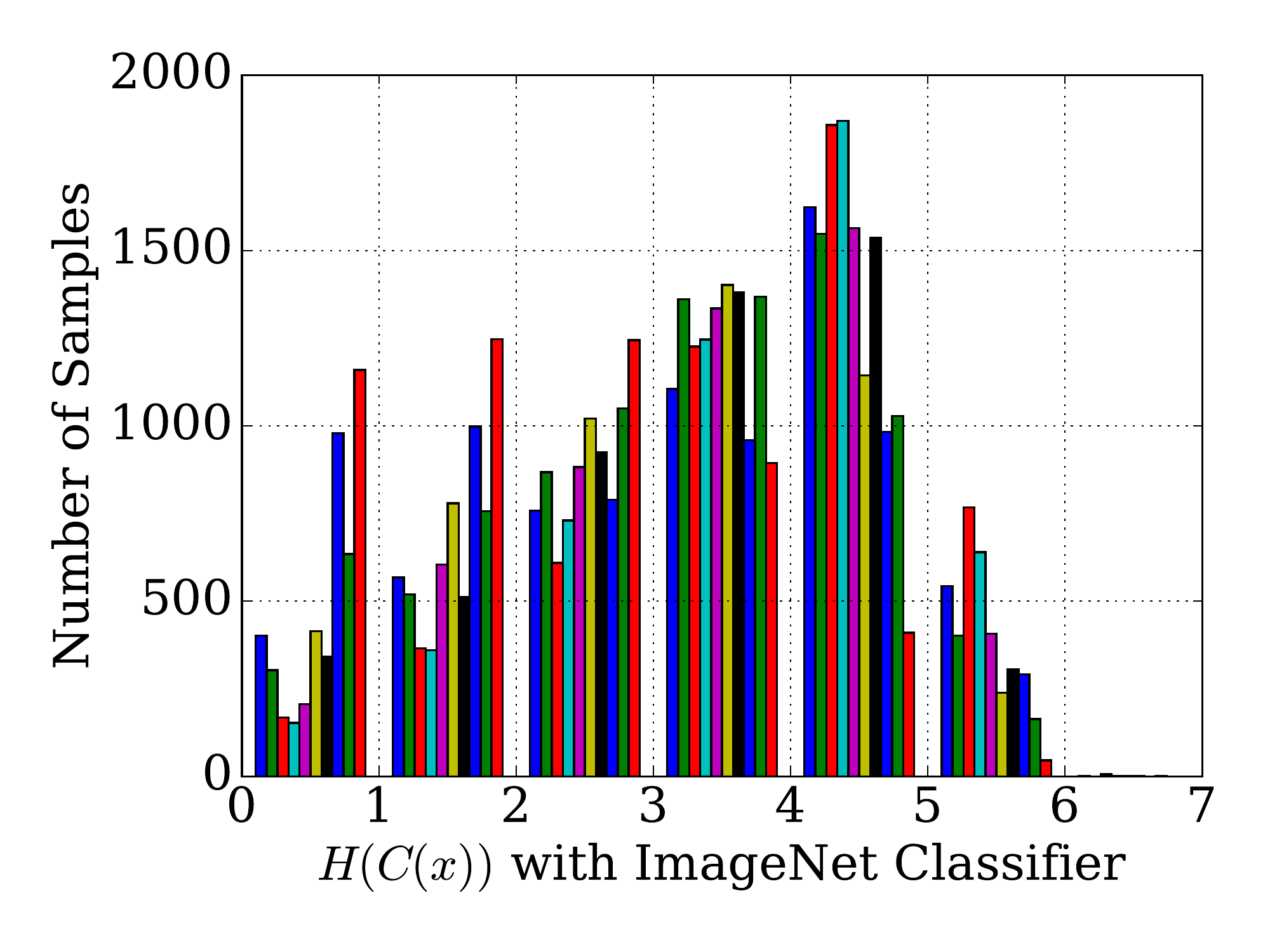}
		\vspace{-7pt}
		\caption{}
		\label{fig: imagenet_h}
		\vspace{-0pt}
	\end{subfigure}
	\begin{subfigure}{0.31\linewidth}
		\vspace{-0pt}
		\centering
		\includegraphics[width=0.90\columnwidth]{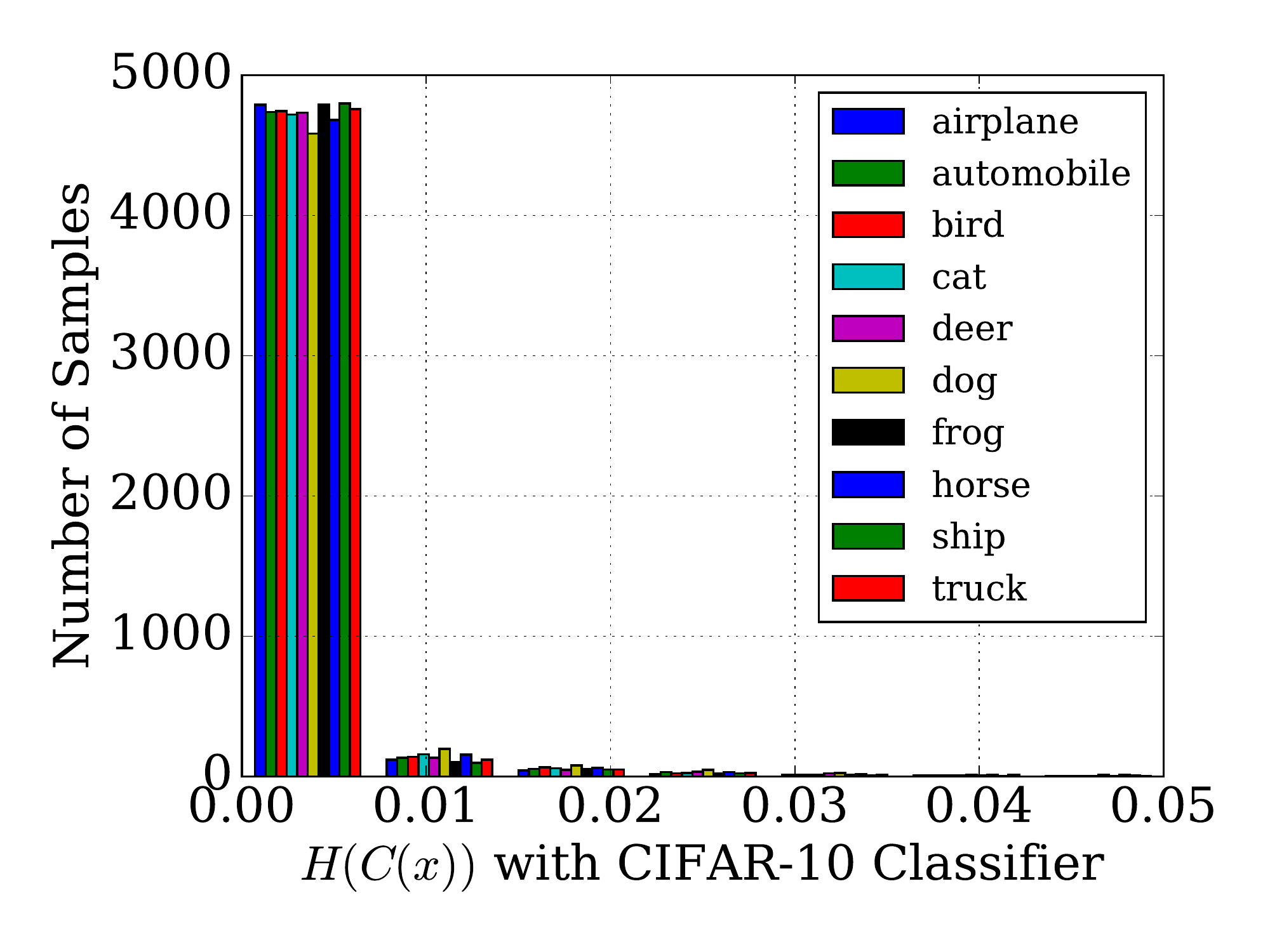}
		\vspace{-7pt}
		\caption{}
		\label{fig: cifar10_h}
		\vspace{-0pt}
	\end{subfigure}
	\vspace{-0pt}
	\caption{Statistics of the CIFAR-10 training images. a) $\bar{C}^G$ over ImageNet classes; b) $H(C(\bx))$ distribution with ImageNet classifier of each class; c) $H(C(\bx))$ distribution with CIFAR-10 classifier of each class. With the Inception model, the value of $H(C(\bx))$ score of CIFAR-10 training data is variant, which means, even in real data, it would still strongly prefer some samples than some others. $H(C(\bx))$ on a classifier that pre-trained on CIFAR-10 has low values for all CIFAR-10 training data and thus can be used as an indicator of sample quality.}
	\label{fig: icpetion score}
	\vspace{-0pt}
\end{figure}

However, taking CIFAR-10 as an illustration, the data are not evenly distributed over the classes under the Inception model trained on ImageNet, which is presented in Figure~\ref{fig: cifar_distribution}. It makes Inception Score problematic in the view of the decomposed scores, i.e.,  $H(\bar{C}^G)$ and $-\mE_{\bx}[H(C(\bx))]$. Such as that one would ask whether a higher $H(\bar{C}^G)$ indicates a better mode coverage and whether a smaller $H(C(\bx))$ indicates a better sample quality.

We experimentally find that, as in Figure~\ref{fig: icp_score_avg}, the value of $H(\bar{C}^G)$ is usually going down during the training process, however, which is expected to increase. And when we delve into the detail of $H(C(\bx))$ for each specific sample in the training data, we find the value of $H(C(\bx))$ score is also variant, as illustrated in Figure \ref{fig: imagenet_h}, which means, even in real data, it would still strongly prefer some samples than some others. 
The $\exp$ operator in Inception Score and the large variance of the value of $H(C(\bx))$ aggravate the phenomenon. We also observe the preference on the class level in Figure~\ref{fig: imagenet_h}, e.g., $\mE_{\bx}[H(C(\bx))]{=}2.14$ for trucks, while $\mE_{\bx}[H(C(\bx))]{=}3.80$ for birds.

It seems, for an ImageNet Classifier, both the two indicators of Inception Score cannot work correctly. Next we will show that Inception Score actually works as a diversity measurement. 

\subsection{Inception Score as a Diversity Measurement}
Since the two individual indicators are strongly correlated, here we go back to Inception Score's original formulation $\mE_{\bx}[\text{KL}(C(\bx)\parallel\bar{C}^G)]$. In this form, we could interpret Inception Score as that it requires each sample's distribution $C(\bx)$  highly different from the overall distribution of the generator $\bar{C}^G$, which indicates a good diversity over the generated samples. 

As is empirically observed, a mode-collapsed generator usually gets a low Inception Score. In an extreme case, assuming all the generated samples collapse to a single point, then $C(\bx){=}{C}^G$ and we would get the minimal Inception Score $1.0$, which is the $\exp$ result of zero. To simulate mode collapse in a more complicated case, we design synthetic experiments as following: given a set of $N$ points $\{x_0, x_1, x_2, ... ,x_{N-1}\}$, with each point $x_i$ adopting the distribution $C(x_i)=v(i)$ and representing class $i$, where $v(i)$ is the vectorization operator of length $N$, as defined in Section \ref{labelgan}, we randomly drop $m$ points, evaluate $\mE_{\bx}[\text{KL}(C(\bx)\parallel\bar{C}^G)]$ and draw the curve. As is showed in Figure \ref{fig: random_dropping}, when $N-m$ increases, the value of $\mE_{\bx}[\text{KL}(C(\bx)\parallel\bar{C}^G)]$ monotonically increases in general, which means that it can well capture the mode dropping and the diversity of the generated distributions. 

\begin{figure}[t]
	\vspace{-0pt}
	\centering
	\begin{subfigure}{0.31\linewidth}
		\vspace{-0pt}
		\centering
		\includegraphics[width=0.90\columnwidth]{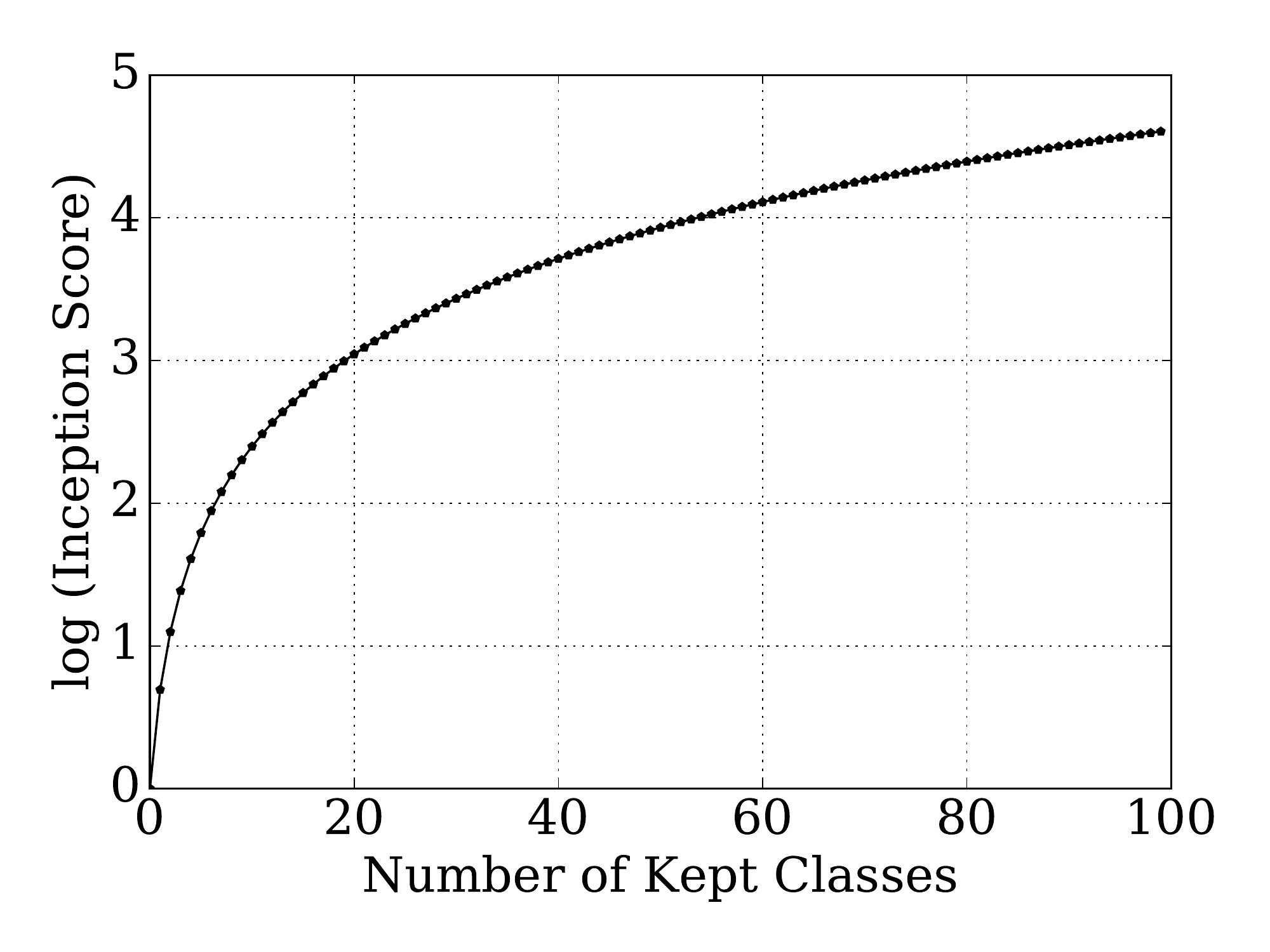}
		\vspace{-7pt}
		\caption{}
		\label{fig: uniform}
		\vspace{-0pt}
	\end{subfigure}
	\hspace{20pt}
	\begin{subfigure}{0.31\linewidth}
		\vspace{-0pt}
		\centering
		\includegraphics[width=0.90\columnwidth]{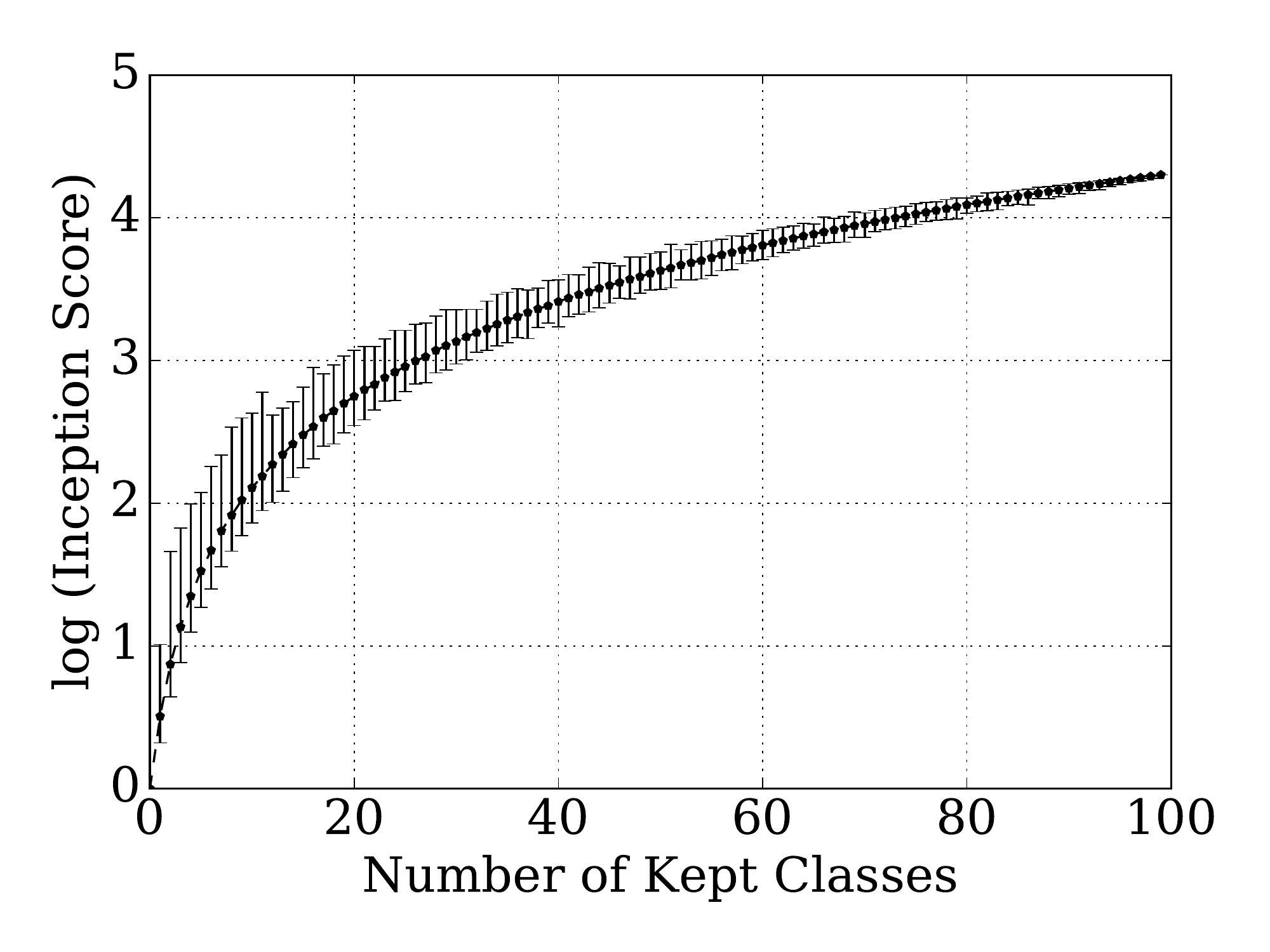}
		\vspace{-7pt}
		\caption{}
		\label{fig: gaussian}
		\vspace{-0pt}
	\end{subfigure}
	\vspace{-3pt}
	\caption{
    Mode dropping analysis of Inception Score.
    a) Uniform density over classes; b) Gaussian density over classes. The value of $\mE_{\bx}[\text{KL}(C(\bx)\parallel\bar{C}^G)]$ monotonically increases in general as the number of kept classes increases, which illustrates Inception Score is able to capture the mode dropping and the diversity of the generated distributions. The error bar indicates the min and max values in 1000 random dropping.
    }
	\label{fig: random_dropping}
\end{figure}

One remaining question is that whether good mode coverage and sample diversity mean high quality of the generated samples. From the above analysis, we do not find any evidence. A possible explanation is that, in practice, sample diversity is usually well correlated with the sample quality.

\subsection{AM Score with Accordingly Pretrained Classifier} \label{app sub: AM-Score}

\begin{figure}[t]
	\vspace{0pt}
	\centering
	\begin{subfigure}{0.31\linewidth}
		\vspace{-0pt}
		\centering
		\includegraphics[width=0.90\columnwidth]{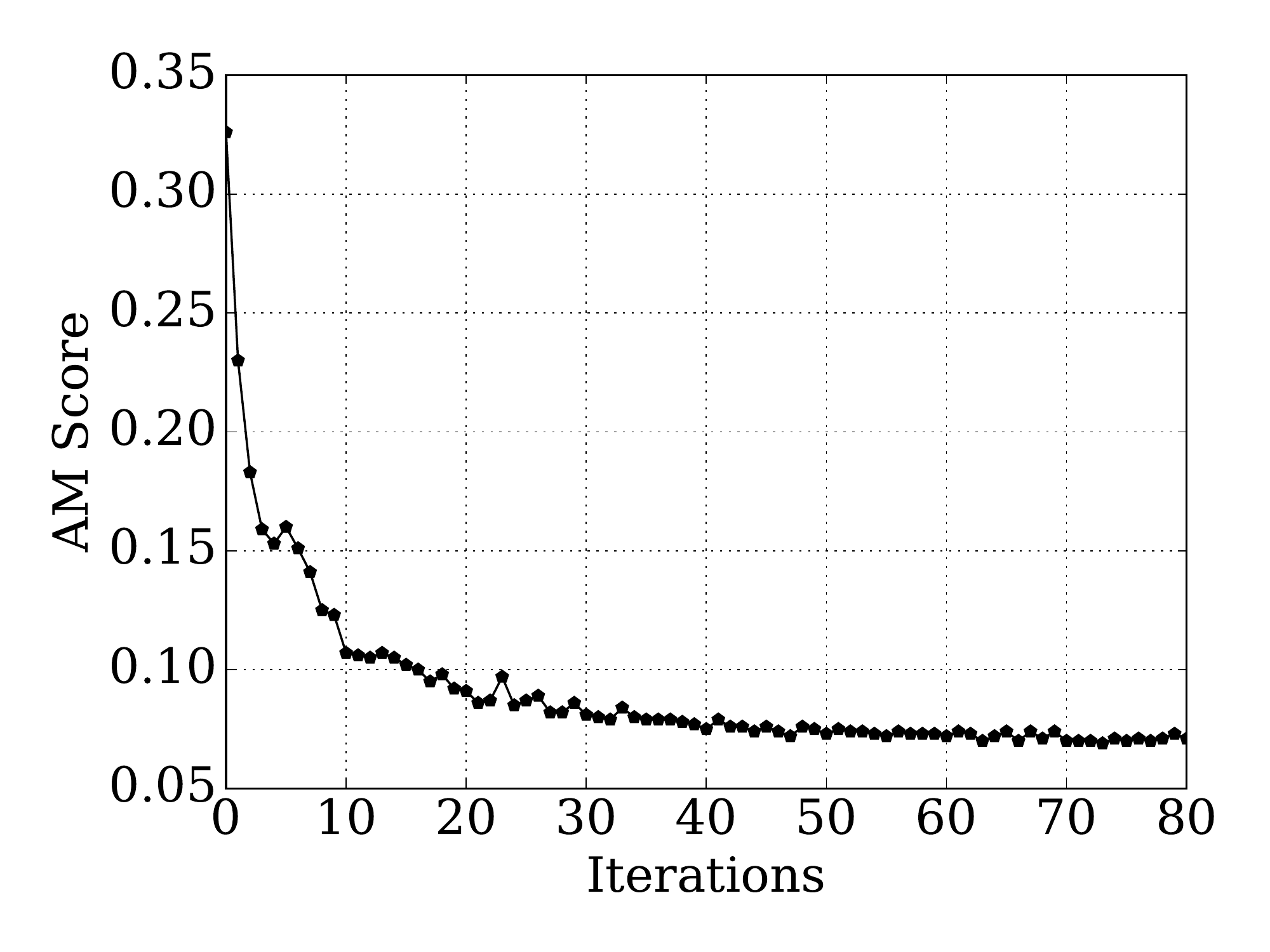}
		\vspace{-7pt}
		\caption{}
		\label{fig: am_score_overall}
		\vspace{-0pt}
	\end{subfigure}
	\begin{subfigure}{0.31\linewidth}
		\vspace{-0pt}
		\centering
		\includegraphics[width=0.90\columnwidth]{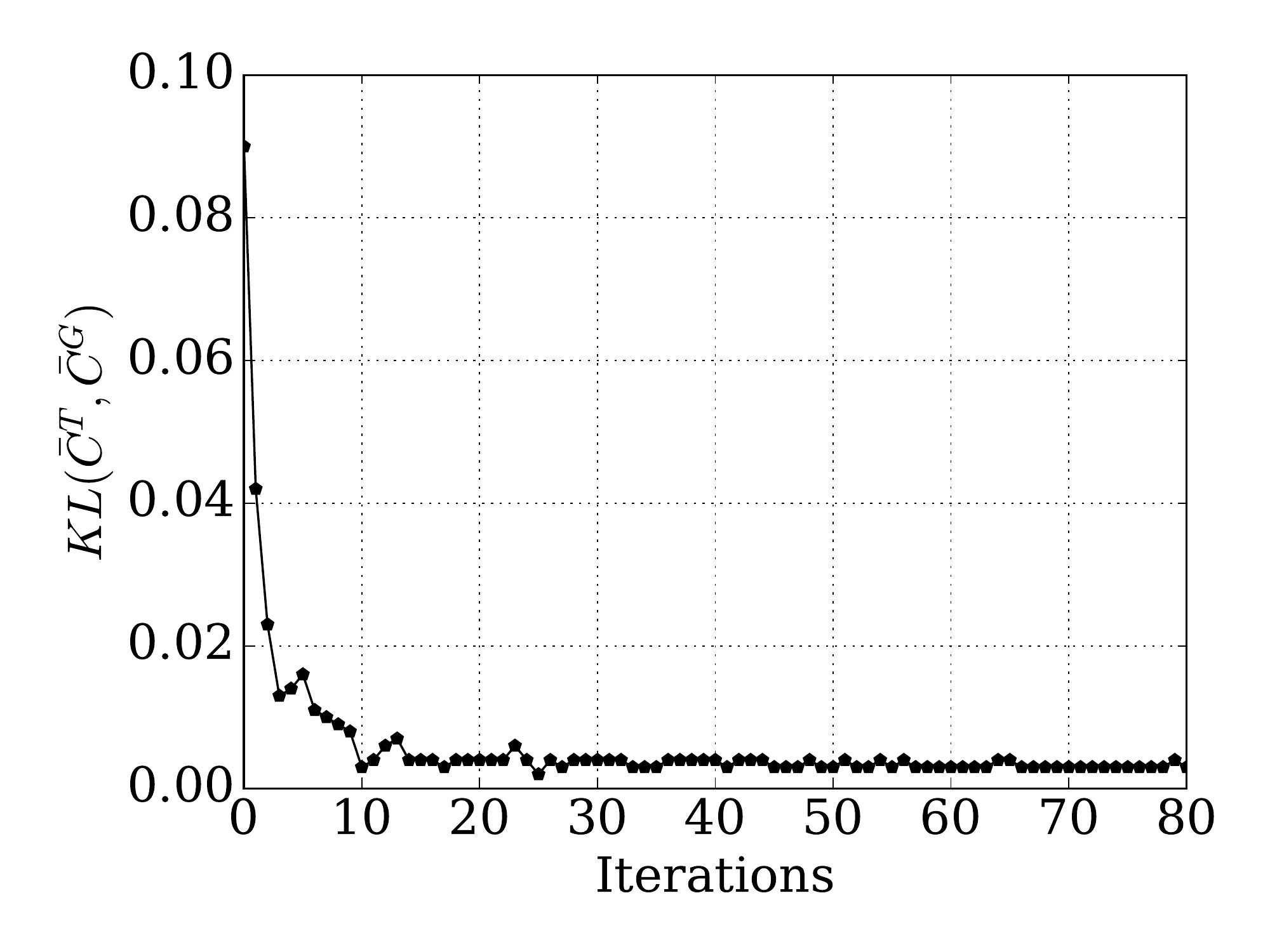}
		\vspace{-7pt}
		\caption{}
		\label{fig: am_score_avg}
		\vspace{-0pt}
	\end{subfigure}
	\begin{subfigure}{0.31\linewidth}
		\vspace{-0pt}
		\centering
		\includegraphics[width=0.90\columnwidth]{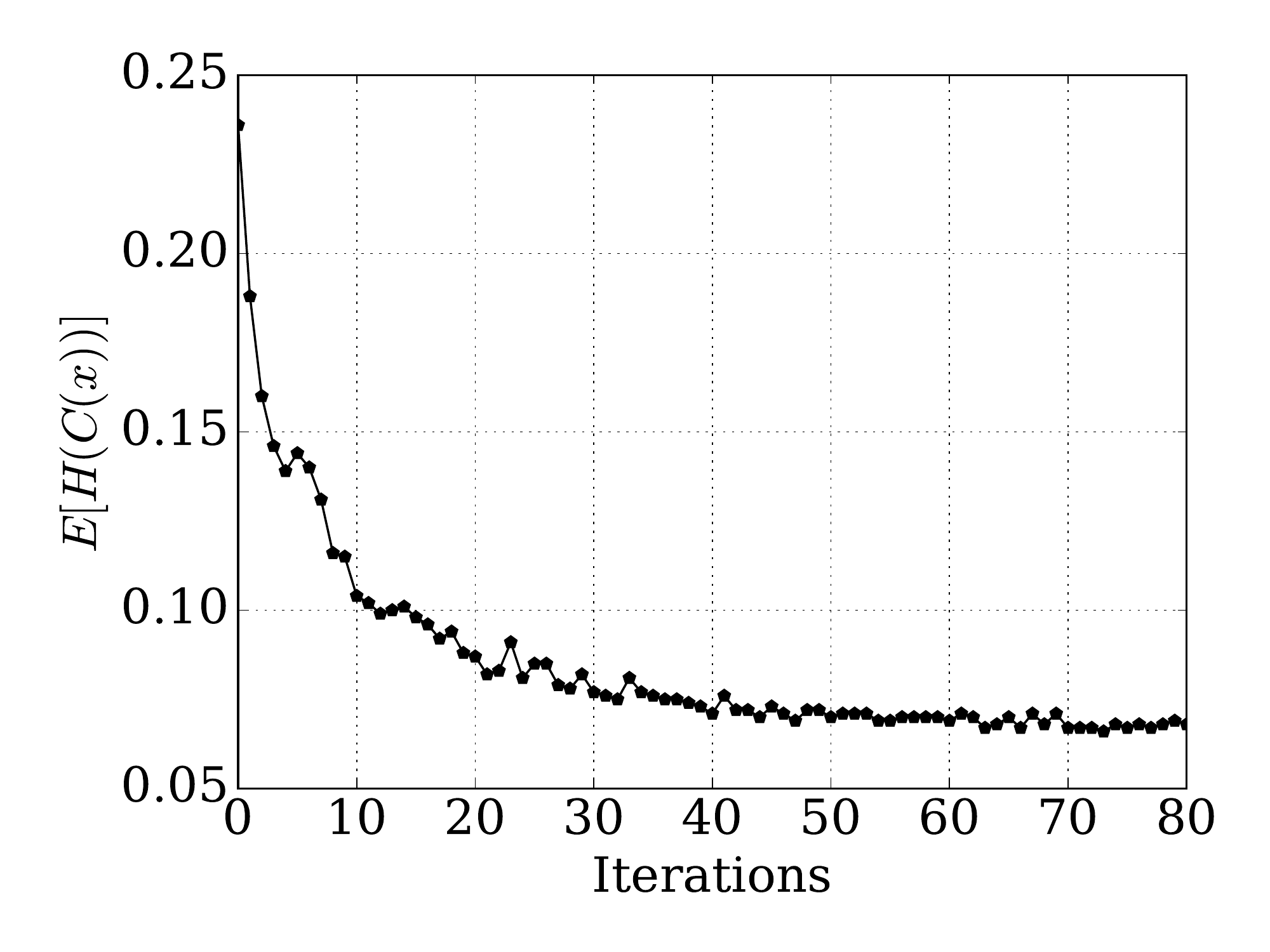}
		\vspace{-7pt}
		\caption{}
		\label{fig: am_score_per}
		\vspace{-0pt}
	\end{subfigure}
	\vspace{-0pt}
	\caption{Training curves of AM Score and its decomposed terms. a) AM Score, i.e. $\text{KL}(\bar{C}^\text{train}, \bar{C}^G) + \mE_{\bx}[H(C(\bx))]$; b) $\text{KL}(\bar{C}^\text{train}, \bar{C}^G)$; c) $\mE_{\bx}[H(C(\bx))]$. All of them works properly (going down) in the training process. 
    }
	\label{fig: am_score}
\end{figure}

Note that if each point $x_i$ has multiple variants such as $x^1_i$, $x^2_i$, $x^3_i$, one of the situations, where $x^2_i$ and $x^3_i$ are missing and only $x^1$ is generated, cannot be detected by $\mE_{\bx}[\text{KL}(C(\bx)\parallel\bar{C}^G)]$ score. It means that with an accordingly pretrained classifier, $\mE_{\bx}[\text{KL}(C(\bx)\parallel\bar{C}^G)]$ score cannot detect intra-class level mode collapse. This also explains why the Inception Network on ImageNet could be a good candidate $C$ for CIFAR-10. Exploring the optimal $C$ is a challenge problem and we shall leave it as a future work.

However, there is no evidence that using an Inception Network trained on ImageNet can accurately measure the sample quality, as shown in Section \ref{Misunderstanding}. To compensate Inception Score, we propose to introduce an extra assessment using an accordingly pretrained classifier. In the accordingly pretrained classifier, most real samples share similar $H(C(\bx))$ and 99.6\% samples hold scores less than 0.05 as showed in Figure \ref{fig: cifar10_h}, which demonstrates that $H(C(\bx))$ of the classifier can be used as an indicator of sample quality. 

The entropy term on $\bar{C}^G$ is actually problematic when training data is not evenly distributed over classes, for that $\argmin\,H(\bar{C}^G)$ is a uniform distribution. To take the $\bar{C}^\text{train}$ into account, we replace $H(\bar{C}^G)$ with a KL divergence between $\bar{C}^\text{train}$ and $\bar{C}^G$. So that
{\small
	\begin{align} \label{eq:am-score}
	\text{AM Score}  \triangleq  \text{KL}( \bar{C}^\text{train}, \bar{C}^G ) + \mE_{\bx} \big[ H\big( C(\bx) \big)  \big] ,
	\end{align}
}%
which requires $\bar{C}^G$ close to $\bar{C}^\text{train}$ and each sample $x$ has a low entropy $C(\bx)$. The minimal value of AM Score is zero, and the smaller value, the better. A sample training curve of AM Score is showed in Figure \ref{fig: am_score}, where all indicators in AM Score work as expected. \footnote{Inception Score and AM Score measure the diversity and quality of generated samples, while FID \citep{heusel2017gans} measures the distance between the generated distribution and the real distribution.}





\section{Experiments} \label{sec: exp}

To empirically validate our analysis and the effectiveness of the proposed method, we conduct experiments on the image benchmark datasets including CIFAR-10 and Tiny-ImageNet\footnote{https://tiny-imagenet.herokuapp.com/} which comprises 200 classes with 500 training images per class.
For evaluation, several metrics are used throughout our experiments, including Inception Score with the ImageNet classifier, AM Score with a corresponding pretrained classifier for each dataset, which is a DenseNet \citep{huang2016densely} model. We also follow \citet{ACGANodena2016conditional} and use the mean MS-SSIM \citep{wang2003multiscale} of randomly chosen pairs of images within a given class, as a coarse detector of intra-class mode collapse. 



A modified DCGAN structure, as listed in the Appendix \ref{list:network}, is used in experiments. Visual results of various models are provided in the Appendix considering the page limit, such as Figure \ref{fig: vidual results}, etc. The repeatable experiment code is published
for further research\footnote{Link for anonymous experiment code: \url{https://github.com/ZhimingZhou/AM-GAN}}.

\subsection{Experiments on CIFAR-10}

\subsubsection{GAN with Auxiliary Classifier}
The first question is whether training an auxiliary classifier without introducing correlated losses to the generator would help improve the sample quality. In other words,  with the generator only with the GAN loss in the AC-GAN$^*$ setting. (referring as GAN$^*$) 

As is shown in Table \ref{table: the scores}, it improves GAN's sample quality, but the improvement is limited comparing to the other methods. It indicates that introduction of correlated loss plays an essential role in the remarkable improvement of  GAN training.


\subsubsection{Comparison Among Different Models}

The usage of the predefined label would make the GAN model transform to its conditional version, which is substantially disparate with generating samples from pure random noises. In this experiment, we use dynamic labeling for AC-GAN$^*$, AC-GAN$^{*+}$ and AM-GAN to seek for a fair comparison among different discriminator models, including LabelGAN and GAN. We keep the network structure and hyper-parameters the same for different models, only difference lies in the output layer of the discriminator, i.e., the number of class logits, which is necessarily different across models.

\begin{table}
\centering
\vspace{-0pt}
\resizebox{1.00\textwidth}{!}{ 
\begin{tabular}{l | r r r r| r r r r}
\hline\Tstrut
\multirow{3}{*}{Model} & \multicolumn{4}{c | }{Inception Score} & \multicolumn{4}{c}{AM Score} \\
\cline{2-9} \Tstrut
& \multicolumn{2}{c|}{CIFAR-10} & \multicolumn{2}{c|}{Tiny ImageNet} & \multicolumn{2}{c|}{CIFAR-10} & \multicolumn{2}{c}{Tiny ImageNet} \\ 
& \multicolumn{1}{c}{dynamic} & \multicolumn{1}{c|}{predefined} & \multicolumn{1}{c}{dynamic} & \multicolumn{1}{c|}{predefined} & \multicolumn{1}{c}{dynamic} & \multicolumn{1}{c|}{predefined} & \multicolumn{1}{c}{dynamic} & \multicolumn{1}{c}{predefined} \\
\hline\Tstrut
GAN             & 7.04 $\pm$ 0.06 & \multicolumn{1}{r|}{7.27 $\pm$ 0.07} & \multicolumn{1}{r} -  & \multicolumn{1}{r|} -                                
                & 0.45 $\pm$ 0.00 & \multicolumn{1}{r|}{0.43 $\pm$ 0.00} & \multicolumn{1}{r} -  & \multicolumn{1}{r} - \\
GAN$^*$         & 7.25 $\pm$ 0.07 & \multicolumn{1}{r|}{7.31 $\pm$ 0.10} & \multicolumn{1}{r} -  & \multicolumn{1}{r|} -                                
                & 0.40 $\pm$ 0.00 & \multicolumn{1}{r|}{0.41 $\pm$ 0.00} & \multicolumn{1}{r} -  & \multicolumn{1}{r} - \\
\hline\Tstrut
AC-GAN$^*$      & 7.41 $\pm$ 0.09 & \multicolumn{1}{r|}{7.79 $\pm$ 0.08} &  7.28 $\pm$ 0.07 & 7.89 $\pm$ 0.11         
                & 0.17 $\pm$ 0.00 & \multicolumn{1}{r|}{0.16 $\pm$ 0.00} &  1.64 $\pm$ 0.02 & 1.01 $\pm$ 0.01  \\
AC-GAN$^{*+}$   & 8.56 $\pm$ 0.11 & \multicolumn{1}{r|}{8.01 $\pm$ 0.09} & 10.25 $\pm$ 0.14 & 8.23 $\pm$ 0.10         
                & 0.10 $\pm$ 0.00 & \multicolumn{1}{r|}{0.14 $\pm$ 0.00} &  1.04 $\pm$ 0.01 & 1.20 $\pm$ 0.01  \\
\hline\Tstrut
LabelGAN        & 8.63 $\pm$ 0.08 & \multicolumn{1}{r|}{7.88 $\pm$ 0.07} & 10.82 $\pm$ 0.16 & 8.62 $\pm$ 0.11 
                & 0.13 $\pm$ 0.00 & \multicolumn{1}{r|}{0.25 $\pm$ 0.00} &  1.11 $\pm$ 0.01 & 1.37 $\pm$ 0.01  \\
\hline\Tstrut
AM-GAN          & \bf 8.83 $\pm$ 0.09 & \multicolumn{1}{r|}{\bf 8.35 $\pm$ 0.12} & \bf 11.45 $\pm$ 0.15 & \bf 9.55 $\pm$ 0.11 
                & \bf 0.08 $\pm$ 0.00 & \multicolumn{1}{r|}{\bf 0.05 $\pm$ 0.00} & \bf  0.88 $\pm$ 0.01 & \bf 0.61 $\pm$ 0.01 \\
\hline
\end{tabular}
}
\caption{Inception Score and AM Score Results. 
Models in the same column share the same network structures \& hyper-parameters. We applied dynamic / predefined labeling for models that require target classes.} 
\label{table: the scores}
\end{table}

\begin{table}
\vspace{3pt}
\centering
\resizebox{0.60\textwidth}{!}{ 
\begin{tabular}{| c | c|  c| c| c| c}
	\hline \Tstrut
	&AC-GAN$^*$ &  AC-GAN$^{*+}$ & LabelGAN & AM-GAN \\
	\hline \Tstrut
	dynamic    &  \bf 0.61 & 0.39 & 0.35 & 0.36 \\
	\hline \Tstrut
	predefined & 0.35 & 0.36 & 0.32 & 0.36\\
	\hline 
\end{tabular}
}
\caption{The maximum value of mean MS-SSIM of various models over the ten classes on CIFAR-10. High-value indicates obvious intra-class mode collapse. Please refer to the Figure \ref{fig: ac-gan results dynamic} in the Appendix for the visual results.}
\label{table: MS-SSIM Scores}
\end{table}

\begin{table}[t]
\vspace{-0pt}
\centering
\resizebox{0.60\textwidth}{!}{ 
\begin{tabular}{ll}
	\hline \Tstrut
	Model &  Score $\pm$ Std. \\
	\hline \Tstrut
	DFM \citep{farley2017improving} & 7.72 $\pm$ 0.13\\
	\hline	 \Tstrut
	Improved GAN \citep{labGANsalimans2016} & 8.09 $\pm$ 0.07\\  
	AC-GAN \citep{ACGANodena2016conditional} &  8.25 $\pm$ 0.07\\ 
	WGAN-GP + AC \citep{gulrajani2017improved} & 8.42 $\pm$ 0.10\\
	SGAN \citep{StackedGGANJohnhuang2016} & 8.59 $\pm$ 0.12\\
	AM-GAN (our work) & \hspace{0pt}\bf8.91 $\pm$ 0.11\\
	\hline \Tstrut
	Splitting GAN \citep{Guillermo2017Splitting} & 8.87 $\pm$ 0.09\\
	\hline \Tstrut
	Real data & $\!\!\!\!\!$ $11.24 \pm 0.12$ \\
	\hline 
\end{tabular}
}
\caption{Inception Score comparison on CIFAR-10. Splitting GAN uses the class splitting technique to enhance the class label information, which is orthogonal to AM-GAN. 
}
\label{table: related work}
\end{table}

As is shown in Table \ref{table: the scores}, AC-GAN$^*$ achieves improved sample quality over vanilla GAN, but sustains mode collapse indicated by the value 0.61 in MS-SSIM as in Table \ref{table: MS-SSIM Scores}.  By introducing adversarial training in the auxiliary classifier, AC-GAN$^{*+}$ outperforms AC-GAN$^*$. As an implicit target class model, LabelGAN suffers from the overlaid-gradient problem and achieves a relatively higher per sample entropy (0.124) in the AM Score, comparing to explicit target class model AM-GAN (0.079) and AC-GAN$^{*+}$ (0.102). In the table, our proposed AM-GAN model reaches the best scores against these baselines.

We also test AC-GAN$^*$ with decreased weight on auxiliary classifier losses in the generator ($\frac{1}{10}$ relative to the GAN loss). It achieves 7.19 in Inception Score, 0.23 in AM Score and 0.35 in MS-SSIM. The 0.35 in MS-SSIM indicates there is no obvious mode collapse, which also conform with our above analysis.

\subsubsection{Inception Score Comparing with Related Work}
AM-GAN achieves Inception Score 8.83 in the previous experiments, which significantly outperforms the baseline models in both our implementation and their reported scores as in Table \ref{table: related work}. By further enhancing the discriminator with more filters in each layer, AM-GAN also outperforms the orthogonal work \citep{Guillermo2017Splitting} that enhances the class label information via class splitting. As the result, AM-GAN achieves the state-of-the-art Inception Score 8.91 on CIFAR-10.
\subsubsection{Dynamic Labeling and Class Condition}

It's found in our experiments that GAN models with class condition (predefined labeling) tend to encounter intra-class mode collapse (ignoring the noise), which is obvious at the very beginning of GAN training and gets exasperated during the process. 

In the training process of GAN, it is important to ensure a balance between the generator and the discriminator. With the same generator's network structures and switching from dynamic labeling to class condition, we find it hard to hold a good balance between the generator and the discriminator: to avoid the initial intra-class mode collapse, the discriminator need to be very powerful; however, it usually turns out the discriminator is too powerful to provide suitable gradients for the generator and results in poor sample quality.
 
Nevertheless, we find a suitable discriminator and conduct a set of comparisons with it. The results can be found in Table \ref{table: the scores}. The general conclusion is similar to the above, AC-GAN$^{*+}$ still outperforms AC-GAN$^*$ and our AM-GAN reaches the best performance. It's worth noticing that the AC-GAN$^*$ does not suffer from mode collapse in this setting. 

In the class conditional version, although with fine-tuned parameters, Inception Score is still relatively low. 
The explanation could be that, in the class conditional version, the sample diversity still tends to decrease, even with a relatively powerful discriminator.  
With slight intra-class mode collapse, the per-sample-quality tends to improve, which results in a lower AM Score. A supplementary evidence, not very strict, of partial mode collapse in the experiments is that: the $\sum |\frac{\partial{G(z)}}{\partial{z}}|$ is around 45.0 in dynamic labeling setting, while it is 25.0 in the conditional version. 

The LabelGAN does not need explicit labels and the model is the same in the two experiment settings. But please note that both Inception Score and the AM Score get worse in the conditional version. The only difference is that the discriminator becomes more powerful with an extended layer, which attests that the balance between the generator and discriminator is crucial. We find that, without the concern of intra-class mode collapse, using the dynamic labeling makes the balance between generator and discriminator much easier.


\begin{figure}[t]
	\vspace{-5pt}
	\centering
	\begin{subfigure}{0.45\linewidth}
		\vspace{-0pt}
		\centering
		\includegraphics[width=0.90\columnwidth]{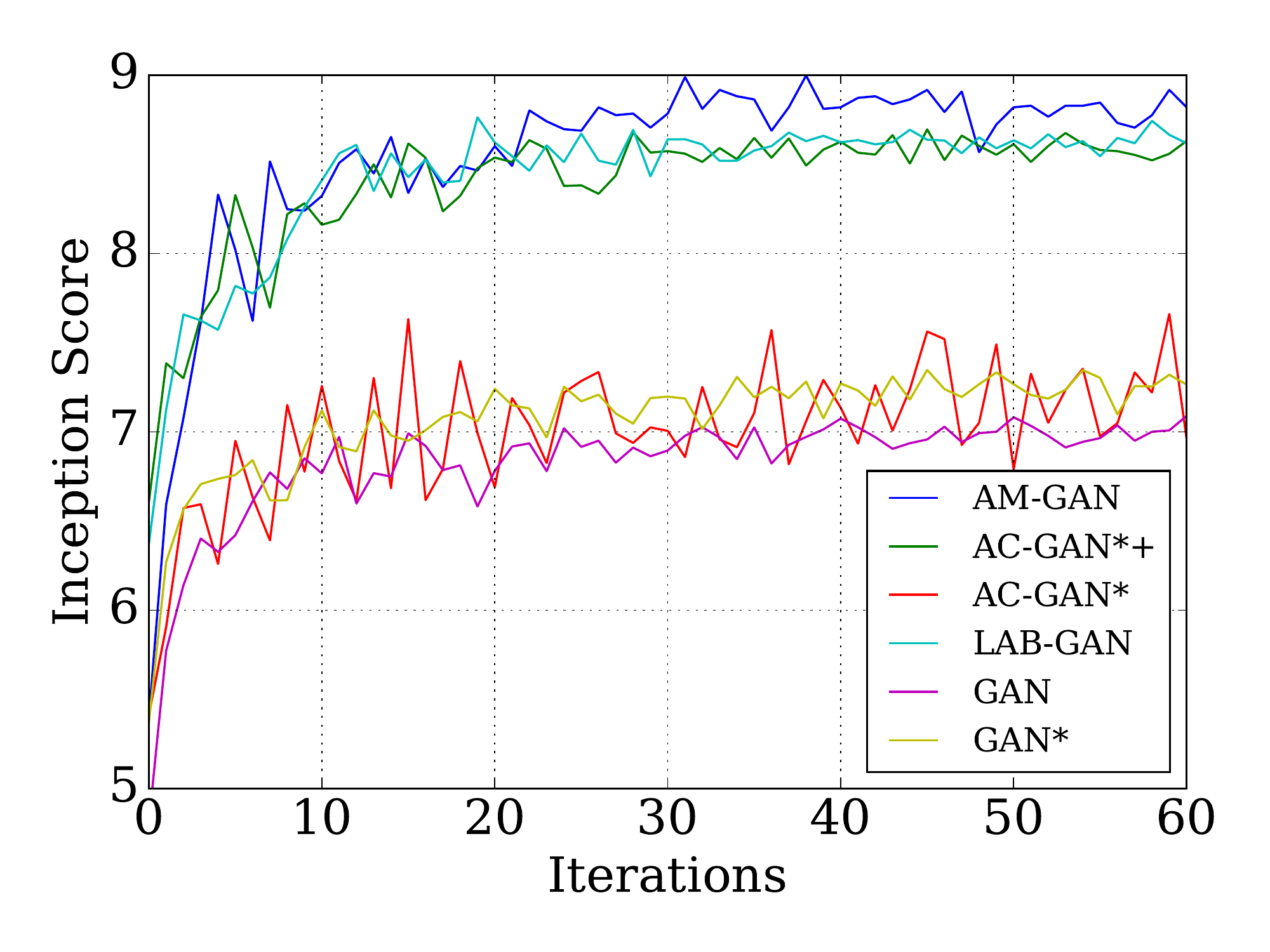}
		\vspace{-10pt}
		\caption{Inception Score training curves}
		\label{fig: icp_curve}
		\vspace{-0pt}
	\end{subfigure}
	%
	\begin{subfigure}{0.45\linewidth}
		\vspace{-0pt}
		\centering
		\includegraphics[width=0.90\columnwidth]{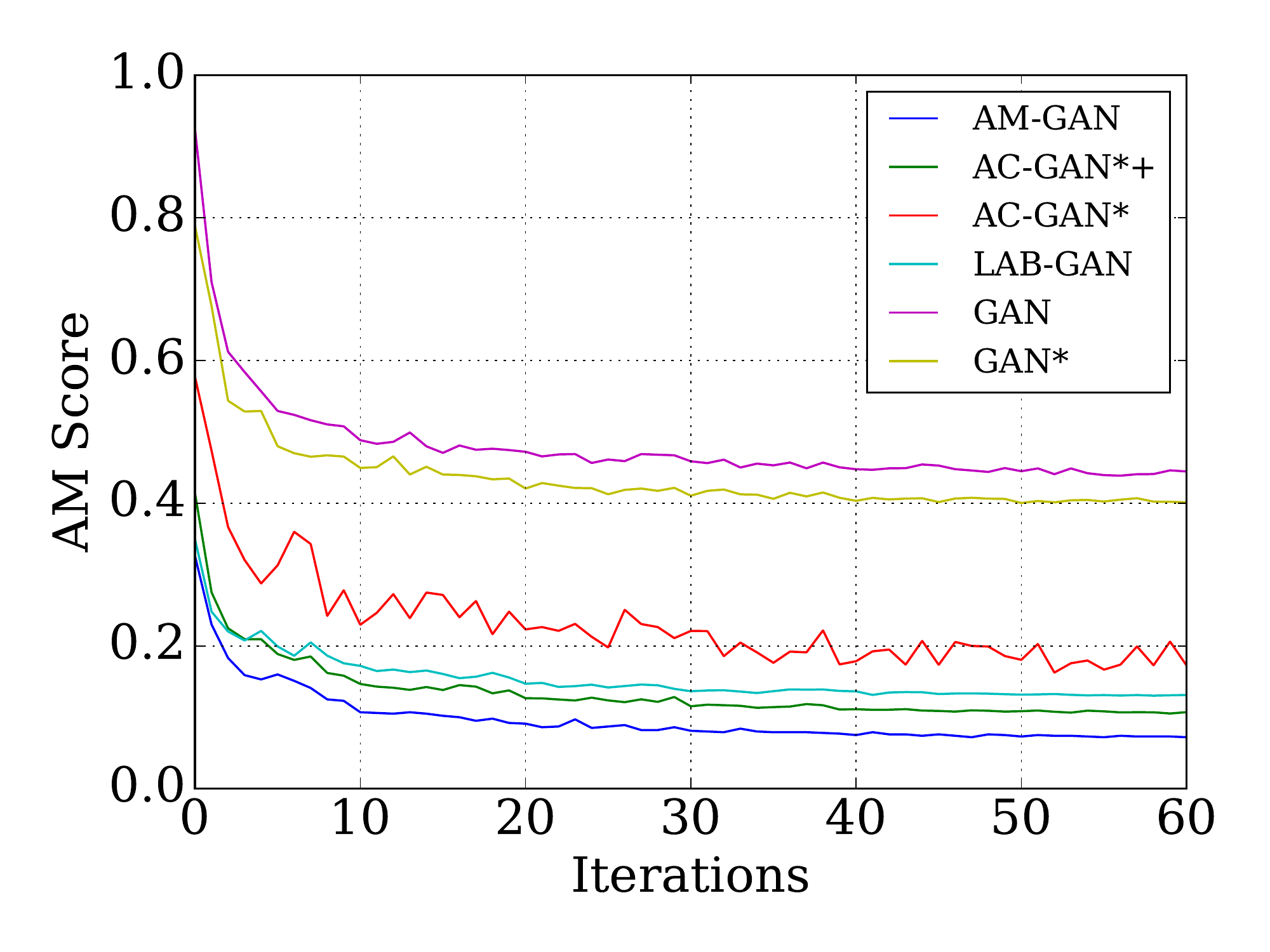}
		\vspace{-10pt}
		\caption{AM Score training curves}
		\label{fig: am_curve}
		\vspace{-0pt}
	\end{subfigure}
	\vspace{-3pt}
	\caption{The training curves of different models in the dynamic labeling setting. 
    }
	\label{fig: comp_curve}
\end{figure}

\subsubsection{The $\mE_{(\bx,y)\sim G}[H(u(y),C(\bx))]$ loss}

Note that we report results of the modified version of AC-GAN, i.e., AC-GAN$^*$ in Table~\ref{table: the scores}. 
If we take the omitted loss $\mE_{(\bx,y)\sim G}[H(u(y),C(\bx))]$ back to AC-GAN$^*$, which leads to the original AC-GAN (see Section~\ref{acgan}), it turns out to achieve worse results on both Inception Score and AM Score on CIFAR-10, though dismisses mode collapse. Specifically, in dynamic labeling setting, Inception Score decreases from $7.41$ to $6.48$ and the AM Score increases from $0.17$ to $0.43$, while in predefined class setting, Inception Score decreases from $7.79$ to $7.66$ and the AM Score increases from $0.16$ to $0.20$. 

This performance drop might be because we use different network architectures and hyper-parameters from AC-GAN \citep{ACGANodena2016conditional}. But we still fail to achieve its report Inception Score, i.e., $8.25$, on CIFAR-10 when using the reported hyper-parameters in the original paper. Since they do not publicize the code, we suppose there might be some unreported details that result in the performance gap. We would leave further studies in future work.   


\subsubsection{The Learning Property}

We plot the training curve in terms of Inception Score and AM Score in Figure \ref{fig: comp_curve}. Inception Score and AM Score are evaluated with the same number of samples $50k$, which is the same as \citet{labGANsalimans2016}. Comparing with Inception Score, AM Score is more stable in general. With more samples, Inception Score would be more stable, however the evaluation of Inception Score is relatively costly. A better alternative of the Inception Model could help solve this problem.

The AC-GAN$^*$'s curves appear stronger jitter relative to the others. It might relate to the counteract between the auxiliary classifier loss and the GAN loss in the generator. Another observation is that the AM-GAN in terms of Inception Score is comparable with LabelGAN and AC-GAN$^{*+}$ at the beginning, while in terms of AM Score, they are quite distinguishable from each other. 

\subsection{Experiments on Tiny-ImageNet} 

In the CIFAR-10 experiments, the results are consistent with our analysis and the proposed method outperforms these strong baselines. We demonstrate that the conclusions can be generalized with experiments in another dataset Tiny-ImageNet. 

The Tiny-ImageNet consists with more classes and fewer samples for each class than CIFAR-10, which should be more challenging. We downsize Tiny-ImageNet samples from $64{\times}64$ to $32{\times}32$ and simply leverage the same network structure that used in CIFAR-10, and the experiment result is showed also in Table \ref{table: the scores}. From the comparison, AM-GAN still outperforms other methods remarkably. And the AC-GAN$^{*+}$ gains better performance than AC-GAN$^*$.

\section{Conclusion} \label{sec: conclusions}

In this paper, we analyze current GAN models that incorporate class label information. Our analysis shows that: LabelGAN works as an implicit target class model, however it suffers from the overlaid-gradient problem at the meantime, and explicit target class would solve this problem. We demonstrate that introducing the class logits in a non-hierarchical way, i.e., replacing the overall real class logit in the discriminator with the specific real class logits, usually works better than simply supplementing an auxiliary classifier, where we provide an activation maximization view for GAN training and highlight the importance of adversarial training. In addition, according to our experiments, predefined labeling tends to lead to intra-class mode collapsed, and we propose dynamic labeling as an alternative. Our extensive experiments on benchmarking datasets validate our analysis and demonstrate our proposed AM-GAN's superior performance against strong baselines. Moreover, we delve deep into the widely-used evaluation metric Inception Score, reveal that it mainly works as a diversity measurement. And we also propose AM Score as a compensation to more accurately estimate the sample quality. 

In this paper, we focus on the generator and its sample quality, while some related work focuses on the discriminator and semi-supervised learning. For future work, we would like to conduct empirical studies on discriminator learning and semi-supervised learning. We extend AM-GAN to unlabeled data in the Appendix \ref{app: unlabeled data}, where unsupervised and semi-supervised is accessible in the framework of AM-GAN. The classifier-based evaluation metric might encounter the problem related to adversarial samples, which requires further study. Combining AM-GAN with Integral Probability Metric based GAN models such as Wasserstein GAN \citep{WGANarjovsky2017} could also be a promising direction since it is orthogonal to our work.


\vspace{15pt}
\bibliography{am-gan}
\bibliographystyle{icml2017}

\newpage
\appendix

\setlength{\abovedisplayskip}{3pt}
\setlength{\belowdisplayskip}{3pt}

\section{Gradient Vanishing \& $-\log(D_r(\bx))$ \& Label Smoothing} \label{app: label smoothing}


\subsection{Label Smoothing}
Label smoothing that avoiding extreme logits value was showed to be a good regularization \citep{Inceptionszegedy2016rethinking}. A general version of label smoothing could be: modifying the target probability of discriminator)
\begin{align}
\big[\hat{D}_r(\bx), \hat{D}_{f}(\bx)\big] =
\begin{cases}
[\lambda_1, 1 - \lambda_1] & \bx \sim G \\
[1 - \lambda_2, \lambda_2] & \bx \sim p_{\text{data}} \\
\end{cases}.
\end{align}

\citet{labGANsalimans2016} proposed to use only one-side label smoothing. That is, to only apply label smoothing for real samples: $\lambda_1=0$ and $\lambda_2>0$. The reasoning of one-side label smoothing is applying label smoothing on fake samples will lead to fake mode on data distribution, which is too obscure. 

We will next show the exact problems when applying label smoothing to fake samples along with the $\log(1{-}D_r(\bx))$ generator loss, in the view of gradient w.r.t. class logit, i.e., the class-aware gradient, and we will also show that the problem does not exist when using the $-\log(D_r(\bx))$ generator loss.

\subsection{The $\log(1{-}D_r(\bx))$ generator loss}
The $\log(1{-}D_r(\bx))$ generator loss with label smoothing in terms of cross-entropy is 
\vspace{3pt}
{\small
	\begin{align}
		L^\text{log(1-D)}_{G}\!&=-\mE_{\bx \sim G} \big[ H \big( [\lambda_1, 1 - \lambda_1], [D_r(\bx), D_{K{+}1}(\bx)] \big) \big],
	\end{align}
}%
\vspace{3pt}
with lemma \ref{lemma:ce_softmax_gradient}, its negative gradient is
\vspace{3pt}
{\small
	\begin{align}
		-\frac{\partial L^\text{log(1-D)}_{G}(\bx)}{\partial l_{r}(\bx)} = D_r(\bx) - \lambda_1,
	\end{align}
}%
\begin{align}
	\begin{cases}
		D_r(\bx) = \lambda_1 & \text{gradient vanishing}   \\
		D_r(\bx) < \lambda_1 & D_r(\bx) \text{ is optimized towards 0} \\
		D_r(\bx) > \lambda_1 & D_r(\bx) \text{ is optimized towards 1}
	\end{cases}.
\end{align}
Gradient vanishing is a well know training problem of GAN. Optimizing $D_r(\bx)$ towards 0 or 1 is also not what desired, because the discriminator is mapping real samples to the distribution with $D_r(\bx) = 1-\lambda_2$.

\subsection{The $-\log(D_r(\bx))$ generator loss}
The $-\log(D_r(\bx))$ generator loss with target $[1{-}\lambda, \lambda]$ in terms of cross-entropy is 
\vspace{3pt}
{\small
	\begin{align}
		L^\text{-log(D)}_{G}\!&=\mE_{\bx \sim G} \big[ H \big( [1 - \lambda, \lambda], [D_r(\bx), D_{K{+}1}(\bx)] \big) \big],
	\end{align}
}%
\vspace{3pt}
the negative gradient of which is 
\vspace{3pt}
{\small
	\begin{align}
		-\frac{\partial L^\text{-log(D)}_{G}(\bx)}{\partial l_{r}(\bx)} = (1-\lambda)-D_r(\bx),
	\end{align}
}%
\begin{align}
	\begin{cases}
		D_r(\bx) = 1-\lambda & \text{stationary point}   \\
		D_r(\bx) < 1-\lambda & D_r(\bx) \text{ towards } 1-\lambda \\
		D_r(\bx) > 1-\lambda & D_r(\bx) \text{ towards } 1-\lambda 
	\end{cases}.
\end{align}


Without label smooth $\lambda$, the $-\log(D_r(x))$ always$^*$ preserves the same gradient direction as $\log(1{-}D_r(x))$ though giving a difference gradient scale. We must note that non-zero gradient does not mean that the gradient is efficient or valid. 

The both-side label smoothed version has a strong connection to Least-Square GAN \citep{mao2016least}: with the fake logit fixed to zero, the discriminator maps real to $\alpha$ on the real logit and maps fake to $\beta$ on the real logit, the generator in contrast tries to map fake sample to $\alpha$. Their gradient on the logit are also similar. 

\section{CatGAN} \label{app: CatGAN and AM-GAN L_D}

The auxiliary classifier loss of AM-GAN can also be viewed as the cross-entropy version of CatGAN: generator of CatGAN directly optimizes entropy $H(R(D(x)))$ to make each sample be one class, while AM-GAN achieves this by the first term of its decomposed loss $H(R(\bv(\bx)), R(D(\bx)))$ in terms of cross-entropy with given target distribution. That is, the AM-GAN is the cross-entropy version of CatGAN that is combined with LabelGAN by introducing an additional fake class.

\subsection{Discriminator loss on fake sample}
The discriminator of CatGAN maximizes the prediction entropy of each fake sample:

\vspace{-10pt}
\begin{align}
	L^\text{Cat'}_{D}=\mE_{\bx \sim G}\big[-H \big(D(\bx) \big)\big].
\end{align}

In AM-GAN, as we have an extra class on fake, we can achieve this in a simpler manner by minimizing the probability on real logits.

	\vspace{-10pt}
	\begin{align}
		L^\text{AM'}_{D} &= \mE_{\bx \sim G}\big[ H \big( F(\bv(K{+}1)), F(D(\bx) ) \big) \big] .
	\end{align}

If $\bv_{\rs{r}}(K{+}1)$ is not zero, that is, when we did negative label smoothing \cite{labGANsalimans2016}, we could define $R(\bv(K{+}1))$ to be a uniform distribution. 

	\vspace{-10pt}
	\begin{align}
		L^\text{AM''}_{D} &= \mE_{\bx \sim G}\big[ H \big( R(\bv(K{+}1)), R(D(\bx) )  \big)\big]  \times \bv_{\rs{r}}(K{+}1).
	\end{align}

As a result, the label smoothing part probability will be required to be uniformly distributed, similar to CatGAN. 

\vspace{20pt}
\section{Unlabeled Data} \label{app: unlabeled data}
In this section, we extend AM-GAN to unlabeled data. Our solution is analogous to CatGAN \cite{CatGANspringenberg2015}.

\subsection{Semi-supervised setting}
Under semi-supervised setting, we can add the following loss to the original solution to integrate the unlabeled data (with the distribution denoted as $p_{\text{unl}}(\bx)$):

	\vspace{-10pt}
	\begin{align}
		L^\text{unl'}_D	&= \mE_{\bx \sim p_{\text{unl}}}\big[ H \big( \bv(\bx), D(\bx) \big) \big].
	\end{align}

\subsection{Unsupervised setting} \label{unsupervised setting}
Under unsupervised setting, we need to introduce one extra loss, analogy to categorical GAN \cite{CatGANspringenberg2015}:

	\vspace{-10pt}
	\begin{align}
		L^\text{unl''}_D &= H \big(p_{\text{ref}},R(\mE_{\bx \sim p_\text{unl}}[D(\bx)]) \big) ,
	\end{align}

where the $p_{\text{ref}}$ is a reference label distribution for the prediction on unsupervised data.  For example, $p_{\text{ref}}$ could be set as a uniform distribution, which requires the unlabeled data to make use of all the candidate class logits. 

This loss can be optionally added to semi-supervised setting, where the $p_{\text{ref}}$ could be defined as the predicted label distribution on the labeled training data $\mathbb{E}_{\bx \sim p_{\text{data}}}[D(\bx)]$.

\newpage
\section{Inception Score} \label{inception score proof}
As a recently proposed metric for evaluating the performance of the generative models, the Inception-Score has been found well correlated with human evaluation \citep{labGANsalimans2016}, where a pre-trained publicly-available Inception model $C$ is introduced. By applying the Inception model to each generated sample $\bx$ and getting the corresponding class probability distribution $C(\bx)$, Inception Score is calculated via

	\vspace{-10pt}
	\begin{align}
	\text{Inception Score} = \exp\big( \mE_{\bx} \big[ \text{KL} \big( C(\bx) \parallel \bar{C}^G \big)  \big] \big), 
	\end{align}	

where $\mE_{\bx}$ is short of $\mE_{\bx \sim G}$ and $\bar{C}^G = \mE_{\bx}[C(\bx)]$ is the overall probability distribution of the generated samples over classes, which is judged by $C$, and KL denotes the Kullback-Leibler divergence which is defined as

	\vspace{-10pt}
	\begin{align}
	\text{KL} (\bp \parallel \bq) &= \textstyle\sum_i p_i \log \frac{p_i}{q_i} = \textstyle\sum_i p_i \log p_i - \textstyle\sum_i p_i \log q_i  
	= -H(\bp) + H(\bp, \bq).
	\end{align}

An extended metric, the Mode Score, is proposed in \cite{ModeGANche2016} to take the prior distribution of the labels into account, which is calculated via

	\vspace{-10pt}
	\begin{align}
	\text{Mode Score}  =  \exp \! \big( \! \mE_{\bx}  \big[ \text{KL} \big( C(\bx)\! \parallel\! \bar{C}^\text{train} \big)  \big] - \text{KL} ( \bar{C}^G \!\parallel\! \bar{C}^\text{train} ) \! \big),
	\end{align}

where the overall class distribution from the training data $\bar{C}^\text{train}$ has been added as a reference. We show in the following that, in fact, Mode Score and Inception Score are equivalent.

\begin{lemma}\label{lemma:expect_ce}
	Let $\bp(\bx)$ be the class probability distribution of the sample $\bx$, and $\bar{\bp}$ denote another probability distribution, then
	
	\begin{equation}\small
	\mE_{\bx} \big[ H\big( \bp(\bx), \bar{\bp} \big) \big] = H\big( \mE_{\bx} \big[\bp(\bx)\big], \bar{\bp} \big).
	\end{equation}
\end{lemma}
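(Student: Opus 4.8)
The claim is that expectation commutes with cross-entropy in its first argument, i.e. $\mE_{\bx}[H(\bp(\bx), \bar{\bp})] = H(\mE_{\bx}[\bp(\bx)], \bar{\bp})$. The plan is to unfold the definition of cross-entropy $H(p,q) = -\sum_i p_i \log q_i$ and exploit the fact that it is \emph{linear} in its first argument when the second argument is held fixed. This linearity is the crux of the whole thing, and there is essentially no obstacle here beyond swapping a finite sum with an expectation.

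First I would write $H(\bp(\bx), \bar{\bp}) = -\sum_i \bp_i(\bx) \log \bar{p}_i$, where the index $i$ ranges over the (finitely many) classes. Then I would take $\mE_{\bx}$ of both sides and push the expectation inside the finite sum (justified since the sum has finitely many terms and each $\log\bar{p}_i$ is a constant w.r.t.\ $\bx$), obtaining
\begin{equation}
\mE_{\bx}\big[H(\bp(\bx), \bar{\bp})\big] = -\sum_i \mE_{\bx}\big[\bp_i(\bx)\big] \log \bar{p}_i = -\sum_i \big(\mE_{\bx}[\bp(\bx)]\big)_i \log \bar{p}_i,
\end{equation}
where $(\mE_{\bx}[\bp(\bx)])_i$ denotes the $i$-th coordinate of the vector-valued expectation $\mE_{\bx}[\bp(\bx)]$ (which is linearity of expectation applied coordinatewise). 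Recognizing the right-hand side as exactly $H(\mE_{\bx}[\bp(\bx)], \bar{\bp})$ by the definition of cross-entropy finishes the proof.

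The only thing to be slightly careful about is that $\bar{\bp}$ must indeed be a fixed distribution that does not depend on $\bx$ — which is exactly the hypothesis of the lemma — so that $\log \bar{p}_i$ can be pulled out of the expectation. I do not anticipate any real difficulty; this is a one-line computation once the definition is expanded, and its role in the paper is simply to let us rewrite $\mE_{\bx}[H(C(\bx))]$-type quantities and thereby show the equivalence of Mode Score and Inception Score.
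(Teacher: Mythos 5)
Your proof is correct and follows exactly the same route as the paper's: expand $H(\bp(\bx),\bar{\bp})=-\sum_i p_i(\bx)\log\bar{p}_i$, use linearity of expectation over the finite sum with $\log\bar{p}_i$ fixed, and recognize the result as $H(\mE_{\bx}[\bp(\bx)],\bar{\bp})$. Nothing to add.
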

With Lemma~\ref{lemma:expect_ce}, we have

	\begin{align}\label{eq:inception=mode}
	&\log(\text{Inception Score} )\nonumber \\  
	&\qquad\qquad= \mE_{\bx} \big[ \text{KL} ( C(\bx) \parallel \bar{C}^G ) \big] \nonumber \\
	&\qquad\qquad= \mE_{\bx} \big[H\big( C(\bx), \bar{C}^G\big)\big] - \mE_{\bx} \big[H\big(C(\bx)\big)\big] \nonumber  \\
	&\qquad\qquad= H\big(\mE_{\bx}\big[C(\bx)\big], \bar{C}^G\big) - \mE_{\bx}\big[H\big(C(\bx)\big)\big]  \nonumber  \\
	&\qquad\qquad= H(\bar{C}^G) + (- \mE_{\bx}\big[H\big(C(\bx)\big)\big]), \\
	\nonumber \\
	&\log(\text{Mode Score}) \nonumber \\
	&\qquad\qquad= \mE_{\bx} \big[ \text{KL} \big( C(\bx) \parallel \bar{C}^\text{train} \big) \big] - \text{KL} ( \bar{C}^G \parallel \bar{C}^\text{train} ) \nonumber \\
	&\qquad\qquad= \mE_{\bx}\big[ H\big(C(\bx), \bar{C}^\text{train}\big)\big] - \mE_{\bx}\big[H\big(C(\bx)\big)\big] - H(\bar{C}^G,  \bar{C}^\text{train}) + H(\bar{C}^G)  \nonumber \\
	&\qquad\qquad= H(\bar{C}^G) + (- \mE_{\bx}\big[H\big(C(\bx)\big)\big]).  
	%
	\end{align}	

\newpage
\section{The Lemma and Proofs}

\setcounter{lemma}{0}

\vspace{10pt}
\begin{lemma}\label{lemma:ce_softmax_gradientp}
	With $\bl$ being the logits vector and $\sigma$ being the softmax function, let $\sigma(\bl)$ be the current softmax probability distribution and $\hat{p}$ denote any target probability distribution, then:
	
	\begin{equation} \small
	-\frac{\partial H \big( \hat{p}, \sigma(\bl) \big)}{\partial \bl} = \hat{p} - \sigma(\bl).
	\end{equation}
\end{lemma}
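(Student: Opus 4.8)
The plan is to compute the gradient one logit at a time and then reassemble it into the vector identity. First I would write the cross-entropy out explicitly, $H(\hat p,\sigma(\bl)) = -\sum_i \hat p_i \log \sigma(\bl)_i$, and substitute the definition of the softmax, $\sigma(\bl)_i = \exp(l_i)/\sum_j \exp(l_j)$. The one genuinely useful manipulation is to push the logarithm through the quotient, $\log \sigma(\bl)_i = l_i - \log\sum_j \exp(l_j)$, which turns the cross-entropy into
$$H(\hat p,\sigma(\bl)) = -\sum_i \hat p_i l_i + \Big(\sum_i \hat p_i\Big)\log\sum_j \exp(l_j).$$

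Next I would invoke the hypothesis that $\hat p$ is a probability distribution, so $\sum_i \hat p_i = 1$, collapsing the coefficient of the log-partition term and leaving $H(\hat p,\sigma(\bl)) = -\sum_i \hat p_i l_i + \log\sum_j \exp(l_j)$. Differentiation with respect to a single coordinate $l_k$ is then immediate: the linear term contributes $-\hat p_k$, and the standard derivative of the log-sum-exp (log-partition) function contributes $\exp(l_k)/\sum_j \exp(l_j) = \sigma(\bl)_k$. Hence $\partial H/\partial l_k = \sigma(\bl)_k - \hat p_k$, so $-\partial H/\partial l_k = \hat p_k - \sigma(\bl)_k$; stacking over $k$ gives $-\partial H(\hat p,\sigma(\bl))/\partial \bl = \hat p - \sigma(\bl)$, as claimed.

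There is no real obstacle here — the proof is a routine calculus computation — so the only things worth flagging are the two points where care is needed: using normalization $\sum_i \hat p_i = 1$ at the right moment (without it the normalizer term would not simplify), and the elementary identity $\partial_{l_k}\log\sum_j \exp(l_j) = \sigma(\bl)_k$. I would also note explicitly that the argument uses nothing about $\hat p$ beyond its being a probability vector, so the lemma applies to every target distribution that appears later in the paper (one-hot vectors $[1,0]$, $\bv(y)$, as well as rescaled vectors such as $R(\bv(\bx))$), which is exactly the generality the subsequent class-aware gradient derivation relies on.
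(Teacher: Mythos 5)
Your proof is correct and follows essentially the same route as the paper's: expand the cross-entropy, rewrite $\log\sigma(\bl)_i$ as $l_i-\log\sum_j\exp(l_j)$, use $\sum_i\hat p_i=1$ to isolate the log-partition term, and differentiate coordinate-wise. The only addition is your explicit flagging of where normalization is used, which the paper leaves implicit.
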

\begin{proof}
	{\small
		\begin{align*}
		& -\bigg( \frac{\partial H \big( \hat{p}, \sigma(\bl) \big) }{\partial \bl} \bigg)_k 
		= -\frac{\partial H \big( \hat{p}, \sigma(\bl) \big) }{\partial l_k} 
		= \frac{\partial \sum_i \hat{p}_i \log \sigma(\bl)_i}{\partial l_k} 
		= \frac{\partial \sum_i \hat{p}_i \log \frac{\exp(l_i)}{\sum_j \exp(l_j)} }{\partial l_k} \\
		& = \frac{\partial \sum_i \hat{p}_i \big(l_i - \log\sum_j \exp(l_j)\big) }{\partial l_k} 
		= \frac{\partial \sum_i \hat{p}_i l_i}{\partial l_k} - \frac{\partial \log \big( \sum_j \exp(l_j) \big)}{\partial l_k} = \hat{p}_k - \frac{\exp(l_k)}{\sum_j \exp(l_j)} \\
		&\Rightarrow -\frac{\partial H \big( \hat{p}, \sigma(\bl) \big)}{\partial \bl} = \hat{p} - \sigma(\bl). \qedhere
		\end{align*}	
	}
\end{proof}

\vspace{15pt}

\begin{lemma}\label{}
	Given $\bv = [v_\rs{1}, \ldots, v_\rs{K{+}1}]$, $\bv_\rs{1:K} \triangleq [v_\rs{1}, \ldots, v_\rs{K}]$, $v_\rs{r} \triangleq \sum_{k=1}^K v_\rs{k}$, $R(\bv) \triangleq \bv_\rs{1:K} / v_\rs{r}$ and $F(\bv) \triangleq [v_\rs{r}, v_\rs{K{+}1}]$, let $\hat{\bp} = [\hat{p}_\rs{1}, \ldots, \hat{p}_\rs{K{+}1}]$, $\bp = [p_\rs{1}, \ldots, p_\rs{K{+}1}]$, then we have:
	
	\begin{equation}
	H\big(\hat{\bp}, \bp\big) = \hat{p}_r H \big(R(\hat{\bp}), R(\bp) \big) + H\big(F(\hat{\bp}), F(\bp)\big).
	\end{equation}
\end{lemma}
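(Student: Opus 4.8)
The statement is a purely algebraic identity about cross-entropy, so the plan is to expand $H(\hat{\bp},\bp) = -\sum_{i=1}^{K+1}\hat{p}_\rs{i}\log p_\rs{i}$ and regroup the sum by splitting off the $(K{+}1)^{\text{th}}$ term. First I would write
\begin{equation}
H(\hat{\bp},\bp) = -\sum_{k=1}^{K}\hat{p}_\rs{k}\log p_\rs{k} \;-\; \hat{p}_\rs{K{+}1}\log p_\rs{K{+}1},
\end{equation}
and then, inside the first sum, I would multiply and divide each $p_\rs{k}$ by $p_\rs{r} \triangleq \sum_{j=1}^{K}p_\rs{j}$, writing $p_\rs{k} = \frac{p_\rs{k}}{p_\rs{r}}\,p_\rs{r}$, so that $\log p_\rs{k} = \log\frac{p_\rs{k}}{p_\rs{r}} + \log p_\rs{r}$. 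Simultaneously I would factor $\hat{p}_\rs{k} = \hat{p}_\rs{r}\cdot\frac{\hat{p}_\rs{k}}{\hat{p}_\rs{r}}$ in front.

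The second step is the regrouping. After substituting, the first sum becomes
\begin{equation}
-\hat{p}_\rs{r}\sum_{k=1}^{K}\frac{\hat{p}_\rs{k}}{\hat{p}_\rs{r}}\log\frac{p_\rs{k}}{p_\rs{r}} \;-\; \hat{p}_\rs{r}\Big(\sum_{k=1}^{K}\frac{\hat{p}_\rs{k}}{\hat{p}_\rs{r}}\Big)\log p_\rs{r},
\end{equation}
and since $\sum_{k=1}^{K}\frac{\hat{p}_\rs{k}}{\hat{p}_\rs{r}} = 1$, the second piece collapses to $-\hat{p}_\rs{r}\log p_\rs{r}$. Recognizing the first piece as $\hat{p}_\rs{r}\,H(R(\hat{\bp}),R(\bp))$ by definition of $R$, and combining $-\hat{p}_\rs{r}\log p_\rs{r} - \hat{p}_\rs{K{+}1}\log p_\rs{K{+}1} = H(F(\hat{\bp}),F(\bp))$ by definition of $F$ (noting $F(\bv) = [v_\rs{r}, v_\rs{K{+}1}]$), I obtain exactly the claimed decomposition. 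This is essentially the computation already displayed in the commented-out proof block in the excerpt, so I would just present that chain of equalities cleanly.

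The only thing resembling an obstacle is bookkeeping around degenerate cases: if $\hat{p}_\rs{r} = 0$ then $R(\hat{\bp})$ is undefined and the factored term $\hat{p}_\rs{r}\sum\frac{\hat{p}_\rs{k}}{\hat{p}_\rs{r}}$ should be read as $0$ by the usual convention $0\log 0 = 0$, and similarly if some $p_\rs{k} = 0$. In the intended application $\hat{\bp}$ is a one-hot vector $\bv(y)$ and $\bp = D(\bx)$ is a softmax output, so $p_\rs{r} > 0$ always and $\hat{p}_\rs{r}\in\{0,1\}$; I would either restrict to this case or simply invoke the standard continuity convention so the identity holds verbatim. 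No convexity, limiting, or probabilistic argument is needed — the whole proof is two lines of algebra plus the two definitional renamings.
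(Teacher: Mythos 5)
Your proposal is correct and follows exactly the same chain of equalities as the paper's proof: split off the $(K{+}1)^{\text{th}}$ term, write $\log p_\rs{k}=\log\frac{p_\rs{k}}{p_\rs{r}}+\log p_\rs{r}$, factor out $\hat{p}_\rs{r}$, and identify the two resulting pieces with $\hat{p}_\rs{r}H(R(\hat{\bp}),R(\bp))$ and $H(F(\hat{\bp}),F(\bp))$. Your extra remark on the degenerate case $\hat{p}_\rs{r}=0$ is a sensible bookkeeping note the paper omits, but it does not change the argument.
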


\begin{proof}
	
	\begin{align*}
	H(\hat{\bp}, \bp) &= -\textstyle\sum_{k = 1}^{K} \hat{p}_\rs{k} \log p_\rs{k} - \hat{p}_\rs{K{+}1} \log p_\rs{K{+}1} \nonumber
	= -\hat{p}_\rs{r} \textstyle\sum_{k = 1}^{K}\frac{\hat{p}_\rs{k}}{\hat{p}_\rs{r}} \log(\frac{p_\rs{k}}{p_\rs{r}} p_\rs{r}) - \hat{p}_\rs{K{+}1} \log p_\rs{K{+}1} \\
	&= -\hat{p}_\rs{r} \textstyle\sum_{k = 1}^{K} \frac{\hat{p}_\rs{k}}{\hat{p}_\rs{r}} (\log \frac{p_\rs{k}}{p_\rs{r}} + \log p_\rs{r}) - \hat{p}_\rs{K{+}1} \log p_\rs{K{+}1} \nonumber \\
	&= -\hat{p}_\rs{r} \textstyle\sum_{k = 1}^{K} \frac{\hat{p}_\rs{k}}{\hat{p}_\rs{r}} \log \frac{p_\rs{k}}{p_\rs{r}} - \hat{p}_\rs{r} \log p_\rs{r} - \hat{p}_\rs{K{+}1} \log p_\rs{K{+}1} \nonumber\\[2pt]
	&=\hat{p}_\rs{r}H \big(R(\hat{\bp}), R(\bp) \big) + H \big( F(\hat{\bp}), F(\bp) \big) . \qedhere
	\end{align*}
\end{proof}%

\vspace{18pt}

\begin{lemma}\label{}
	Let $\bp(\bx)$ be the class probability distribution of the sample $\bx$ that from a certain data distribution, and $\bar{\bp}$ denote the reference probability distribution, then
	
	\begin{equation}\small
	\mE_{\bx} \big[ H\big( \bp(\bx), \bar{\bp} \big) \big] = H\big( \mE_{\bx} \big[\bp(\bx)\big], \bar{\bp} \big).
	\end{equation}
\end{lemma}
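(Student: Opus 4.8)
The plan is to prove the identity $\mE_{\bx}[H(\bp(\bx),\bar{\bp})] = H(\mE_{\bx}[\bp(\bx)],\bar{\bp})$ by simply expanding the cross-entropy on the left-hand side using its definition $H(p,q)=-\sum_i p_i\log q_i$ and then using linearity of expectation. This is the key observation: the cross-entropy $H(p,\bar{\bp})$ is \emph{linear} in its first argument (since $\bar{\bp}$ is fixed and does not depend on $\bx$, the quantities $\log\bar{p}_i$ are just constants), so the expectation commutes with it.

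Concretely, first I would write $\mE_{\bx}[H(\bp(\bx),\bar{\bp})] = \mE_{\bx}\bigl[-\sum_i \bp_i(\bx)\log\bar{p}_i\bigr]$. Then I would pull the finite sum over classes $i$ outside the expectation (the sum is finite, so this is unproblematic) to get $-\sum_i \mE_{\bx}[\bp_i(\bx)]\log\bar{p}_i$, using that $\log\bar{p}_i$ is a constant w.r.t.\ $\bx$. Finally I would recognize that $\mE_{\bx}[\bp_i(\bx)]$ is exactly the $i$-th component of the vector $\mE_{\bx}[\bp(\bx)]$, so the expression equals $H(\mE_{\bx}[\bp(\bx)],\bar{\bp})$ by definition of $H$. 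A one-line chain of equalities suffices:
\begin{equation*}
\mE_{\bx}\bigl[H(\bp(\bx),\bar{\bp})\bigr] = -\mE_{\bx}\Bigl[\textstyle\sum_i \bp_i(\bx)\log\bar{p}_i\Bigr] = -\textstyle\sum_i \mE_{\bx}[\bp_i(\bx)]\,\log\bar{p}_i = H\bigl(\mE_{\bx}[\bp(\bx)],\bar{\bp}\bigr).
\end{equation*}

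There is essentially no main obstacle here: the result is an immediate consequence of the linearity of expectation combined with the fact that cross-entropy is linear in its first slot. The only minor point worth noting is that one needs the class space to be finite (or at least that the sum/integral defining $H$ can be interchanged with the expectation over $\bx$), which holds trivially in the setting of this paper where there are $K$ (or $K{+}1$) classes. I would keep the proof to the single displayed chain of equalities above, perhaps with a half-sentence of prose pointing out that $\log\bar{p}_i$ is constant in $\bx$ and that the interchange of the finite sum and the expectation is what makes everything work.
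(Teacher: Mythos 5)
Your proof is correct and is essentially identical to the paper's own argument: both expand $H(\bp(\bx),\bar{\bp})=-\sum_i p_i(\bx)\log\bar{p}_i$, exchange the finite sum with the expectation, and identify $\mE_{\bx}[p_i(\bx)]$ as the $i$-th component of $\mE_{\bx}[\bp(\bx)]$. Nothing further is needed.
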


\begin{proof}
		\begin{align*}
		\mE_{\bx} \big[ H\big( \bp(\bx), \bar{\bp} \big) \big] &= \mE_{\bx} \big[{-}\textstyle\sum_i p_i(\bx) \log \bar{p}_i \big]  
		= {-}\textstyle\sum_i  \mE_{\bx} [ p_i(\bx) ] \log \bar{p}_i \\ 
		&= {-} \sum_i \big(\mE_{\bx} [\bp(\bx)]\big)_i \log \bar{p}_i 
		= H\big( \mE_{\bx} \big[\bp(\bx)\big], \bar{\bp} \big). \qedhere
		\end{align*}	
\end{proof}

\newpage

\section{Network Structure \& Hyper-Parameters} \label{list:network}

\vspace{5pt}
\begin{tabular}{c|c|c|c|c|c|c}
\multicolumn{7}{l}{Generator:} \\
\hline \Tstrut 
Operation     & Kernel & Strides & Output Dims & Output Drop & Activation & BN \\[5pt]
\hline \Tstrut
Noise         &   N/A     &    N/A     & 100 $|$ 110 & 0.0 & N(0.0, 1.0)\\[5pt]
Linear        &   N/A  &   N/A   &  4$\times$4$\times$768   &  0.0  & Leaky ReLU & True \\[5pt]
Deconvolution & 3$\times$3 & 2$\times$2 & 8$\times$8$\times$384 & 0.0 & Leaky ReLU & True  \\[5pt]
Deconvolution & 3$\times$3 & 2$\times$2 & 16$\times$16$\times$192 & 0.0 & Leaky ReLU & True  \\[5pt]
Deconvolution & 3$\times$3 & 2$\times$2 & 32$\times$32$\times$96 & 0.0 & Leaky ReLU & True \\[5pt]
Deconvolution & 3$\times$3 & 1$\times$1 & 32$\times$32$\times$3 & 0.0 & Tanh\\[5pt]
\hline
\multicolumn{7}{c}{} \\
\multicolumn{7}{l}{Discriminator:} \\
\hline \Tstrut
Operation     & Kernel & Strides & Output Dims & Output Drop & Activation \\[5pt]
\hline \Tstrut
Add Gaussian Noise       &    N/A    &    N/A     & 32$\times$32$\times$3 & 0.0 &N(0.0, 0.1)\\[5pt]
Convolution & 3$\times$3 & 1$\times$1 & 32$\times$32$\times$64  & 0.3 & Leaky ReLU & True\\[5pt]
Convolution & 3$\times$3 & 2$\times$2 & 16$\times$16$\times$128 & 0.3 & Leaky ReLU & True\\[5pt]
Convolution & 3$\times$3 & 2$\times$2 & 8$\times$8$\times$256 & 0.3 & Leaky ReLU & True\\[5pt]
Convolution & 3$\times$3 & 2$\times$2 & 4$\times$4$\times$512 & 0.3 & Leaky ReLU & True\\[5pt]
Convolution* & 3$\times$3 & 1$\times$1 & 4$\times$4$\times$512 & 0.3 & Leaky ReLU & True\\[5pt]
AvgPool & N/A & N/A & 1$\times$1$\times$512 & 0.3 & N/A & \\[5pt]
Linear        &   N/A  &   N/A   &  10 $|$ 11 $|$ 12   &  0.0  & Softmax \\[5pt]
\hline 
\multicolumn{7}{l}{}\\\multicolumn{7}{l}{}\\

\hline
\multicolumn{7}{l}{The *layer was only used for class condition experiments} \\
\hline
\multicolumn{7}{l}{Optimizer: Adam with beta1=0.5, beta2=0.999; Batch size=100.} \\
\hline
\multicolumn{7}{l}{Learning rate: Exponential decay with stair, initial learning rate 0.0004.} \\
\hline
\multicolumn{7}{l}{We use weight normalization for each weight} \\
\hline
\end{tabular}

\begin{figure}[t]
	\vspace{-00pt}
	\centering
	\begin{subfigure}{0.31\linewidth}
		\vspace{-0pt}
		\centering
		\includegraphics[width=0.90\columnwidth]{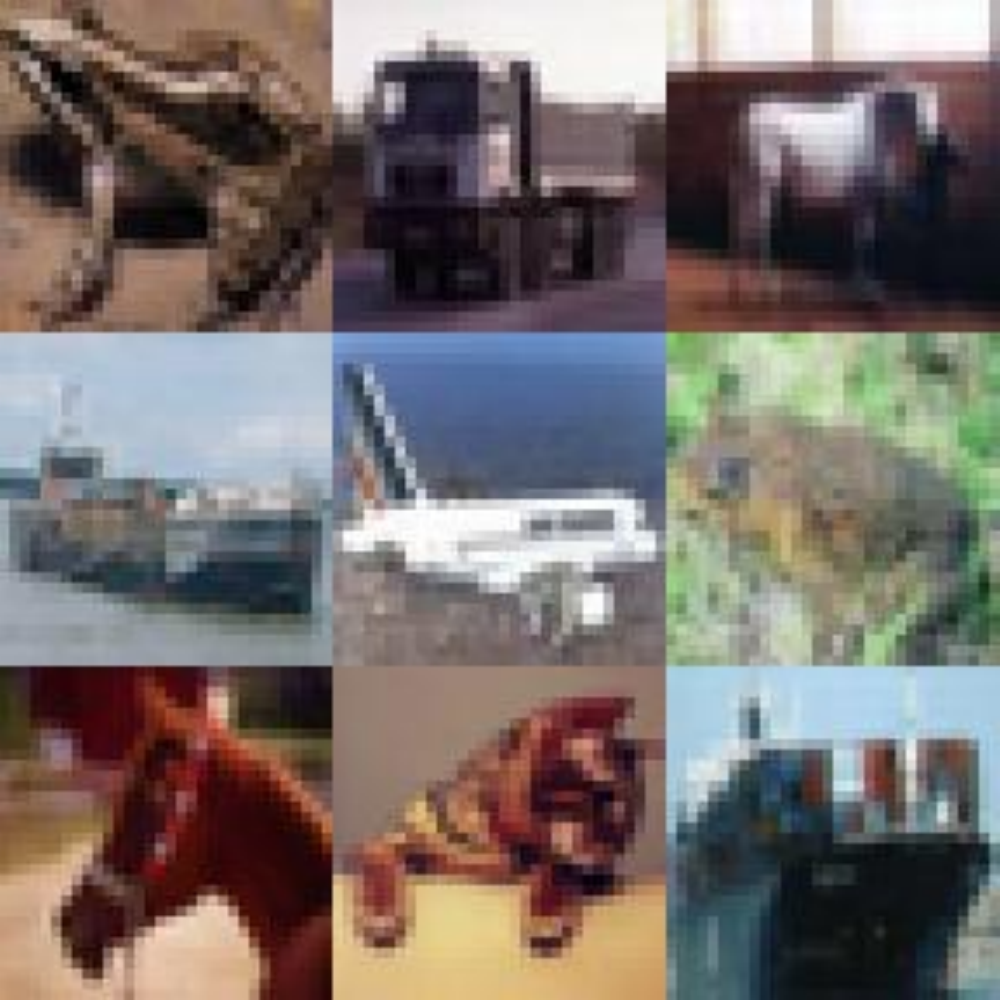}
		\caption{}
		\label{fig: 1}
		\vspace{-0pt}
	\end{subfigure}
	\begin{subfigure}{0.31\linewidth}
		\vspace{-0pt}
		\centering
		\includegraphics[width=0.90\columnwidth]{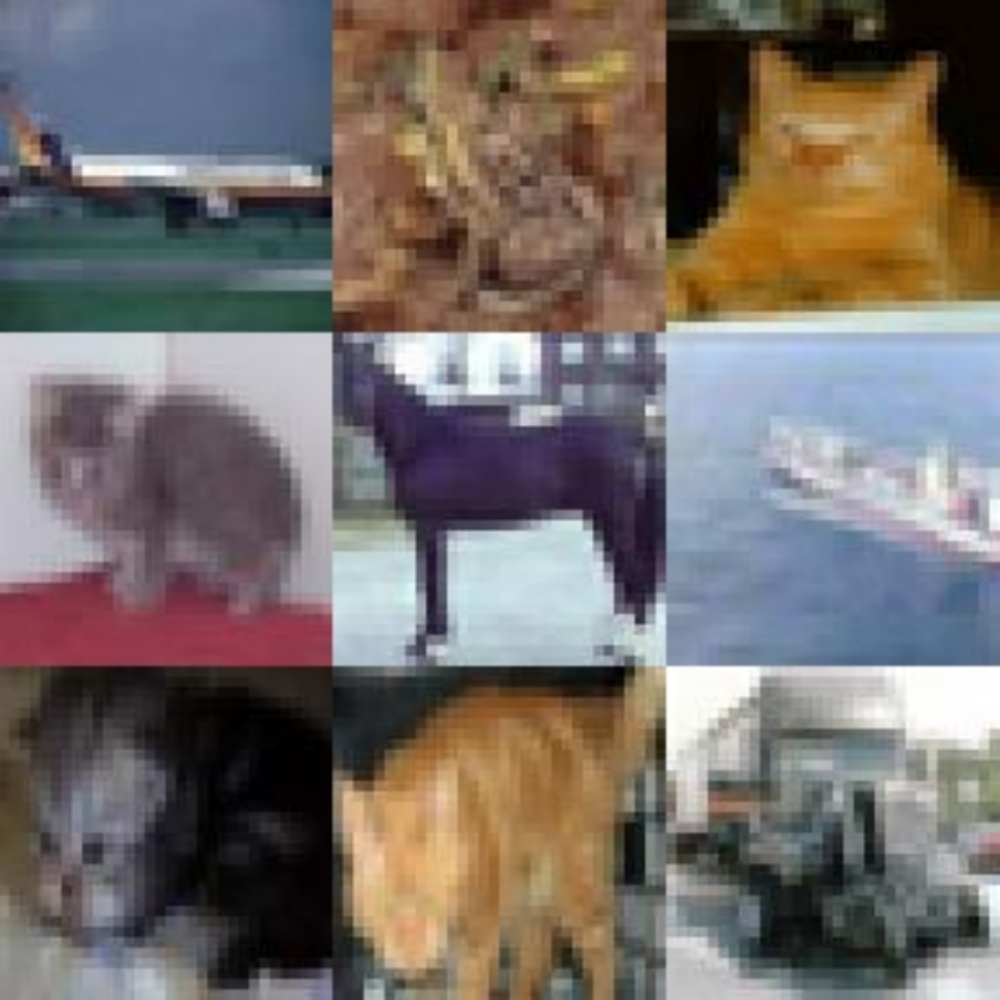}
		\caption{}
		\label{fig: 2}
		\vspace{-0pt}
	\end{subfigure}
	\begin{subfigure}{0.31\linewidth}
		\vspace{-0pt}
		\centering
		\includegraphics[width=0.90\columnwidth]{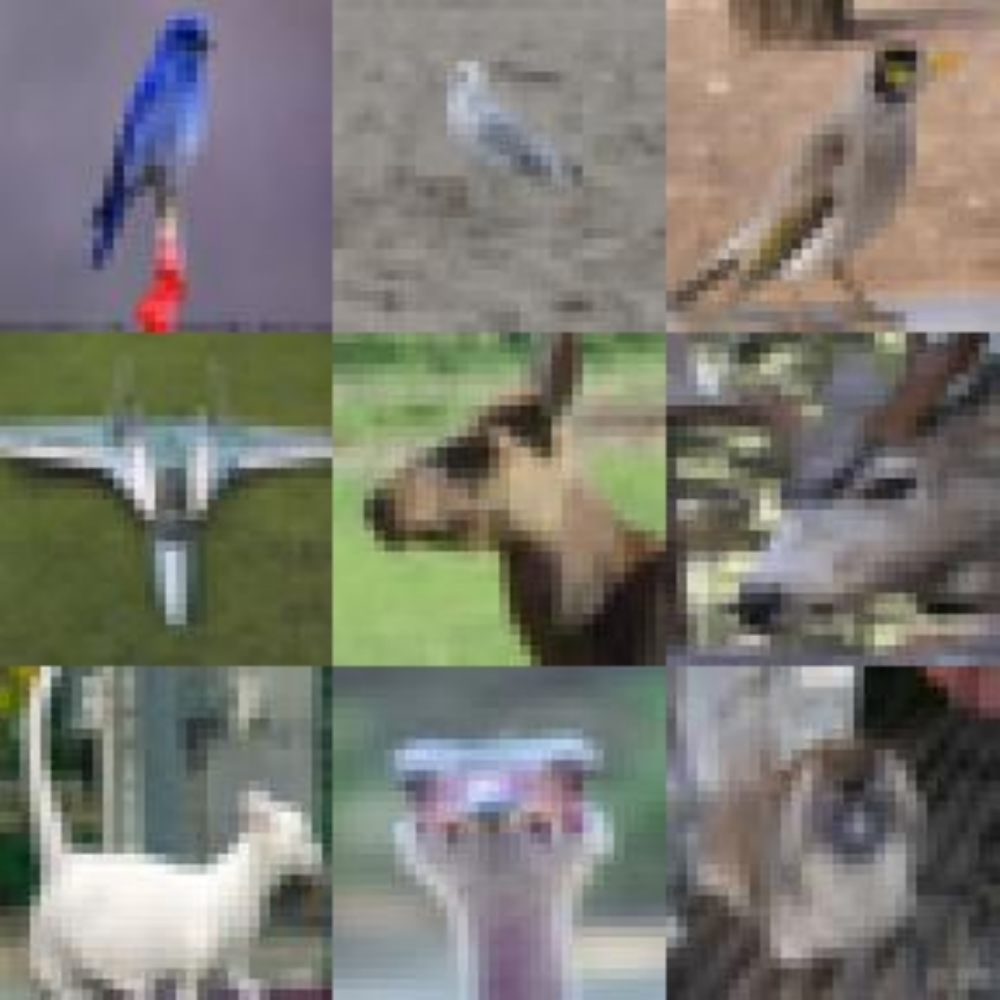}
		\caption{}
		\label{fig: 3}
		\vspace{-0pt}
	\end{subfigure}
	\vspace{-5pt}
	\caption{$H(C(\bx)$  of Inception Score in Real Images. a)  $0{<}H(C(\bx){<}1$ ; b) $3{<}H(C(\bx){<4}$; c)  $6{<}H(C(\bx){<}7$.}
	\label{fig: icp_images}
	\vspace{10pt}
\end{figure}

\newpage

\begin{figure}[t]
	\centering
	\vspace{-30pt}
	\centering
	\includegraphics[width=0.80\columnwidth]{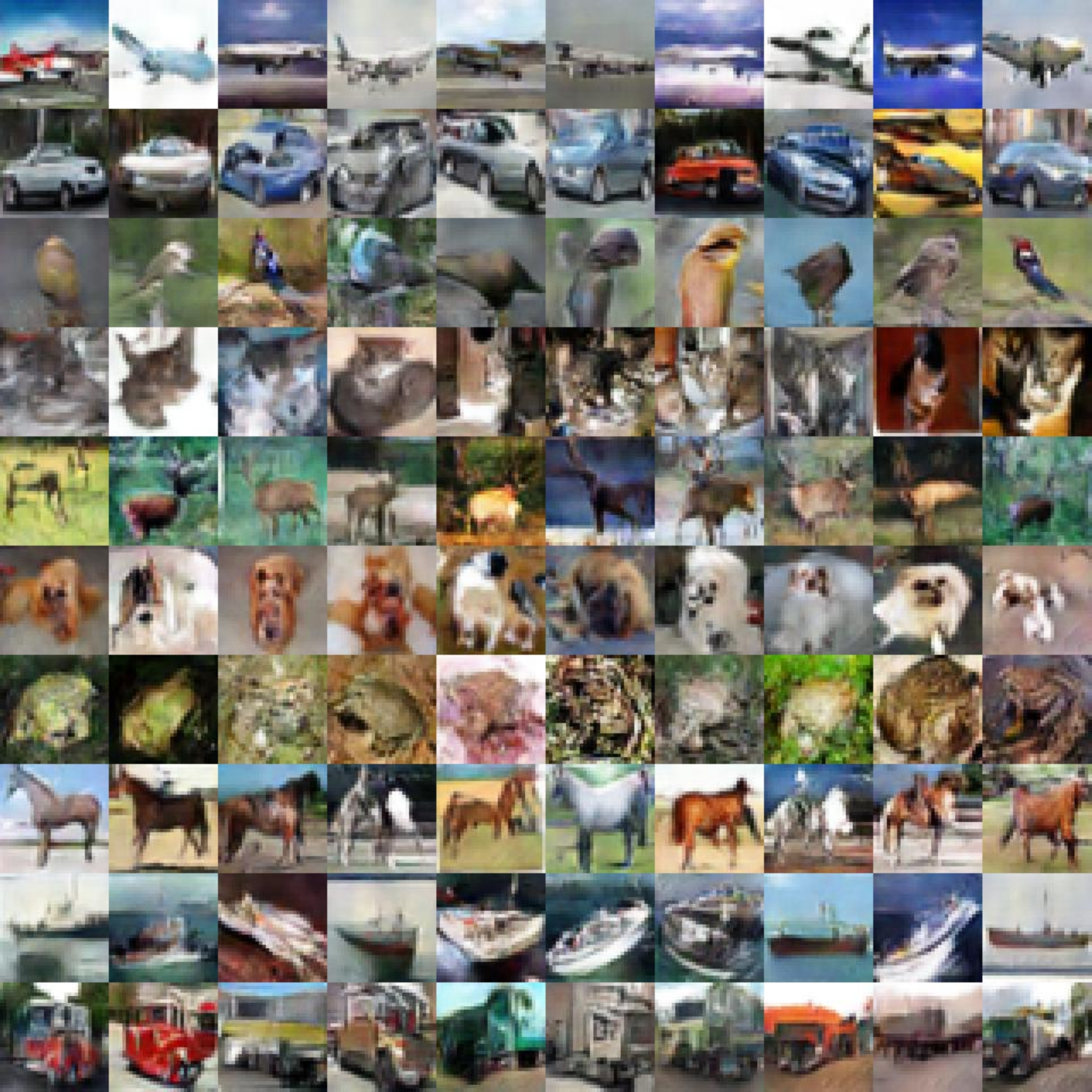}
	\vspace{-5pt}
	\caption{Random Samples of AM-GAN: Dynamic Labeling}
	\label{fig: vidual results}
\end{figure}

\begin{figure}[t]
	\centering
	\vspace{-10pt}
	\centering
	\includegraphics[width=0.80\columnwidth]{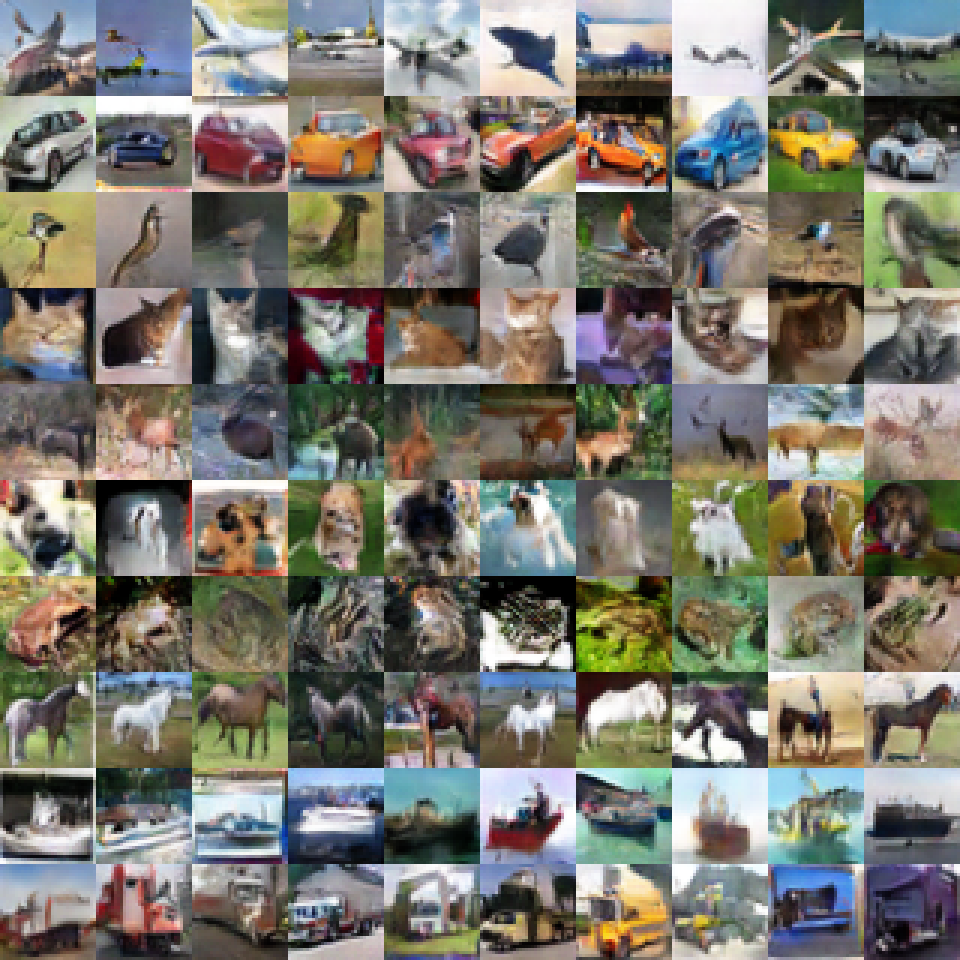}
	\vspace{-5pt}
	\caption{Random Samples of AM-GAN: Class Condition}
\end{figure}

\begin{figure}[t]
	\centering
	\vspace{-30pt}
	\centering
	\includegraphics[width=0.80\columnwidth]{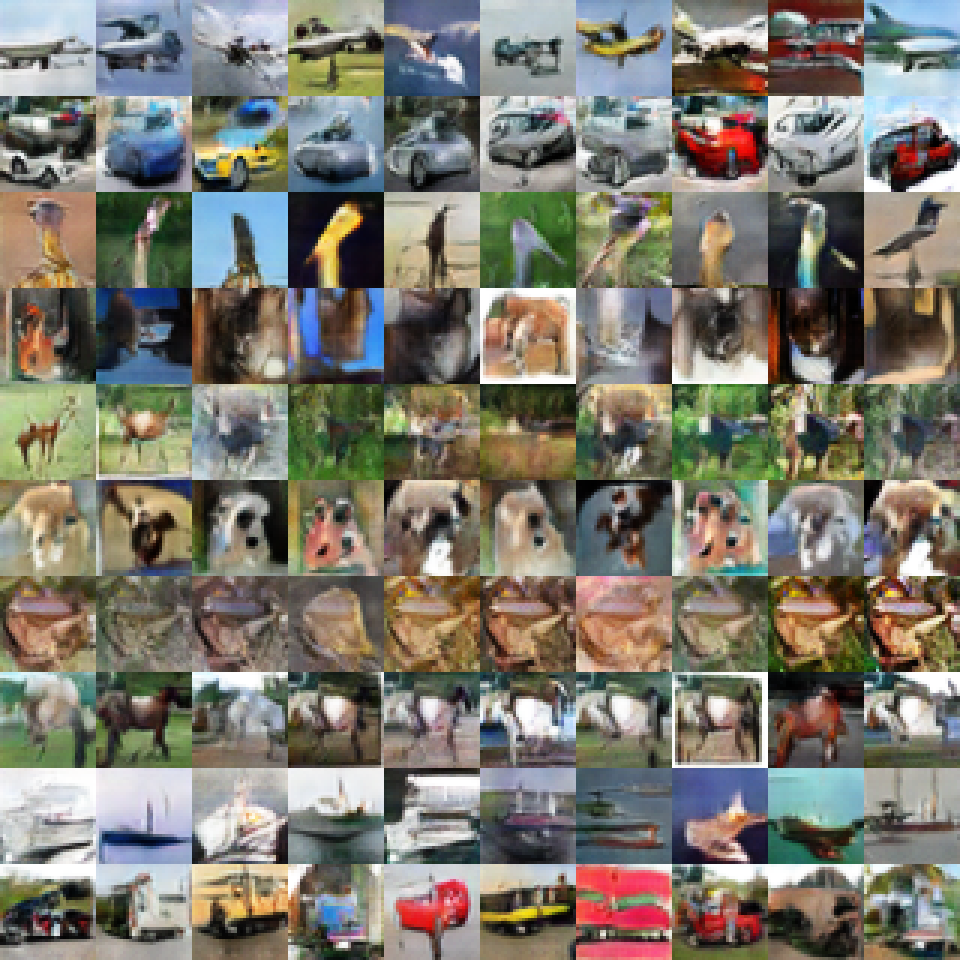}
	\vspace{-5pt}
	\caption{Random Samples of AC-GAN$^*$: Dynamic Labeling}
	\label{fig: ac-gan results dynamic}
\end{figure}

\begin{figure}[t]
	\centering
	\vspace{-10pt}
	\centering
	\includegraphics[width=0.80\columnwidth]{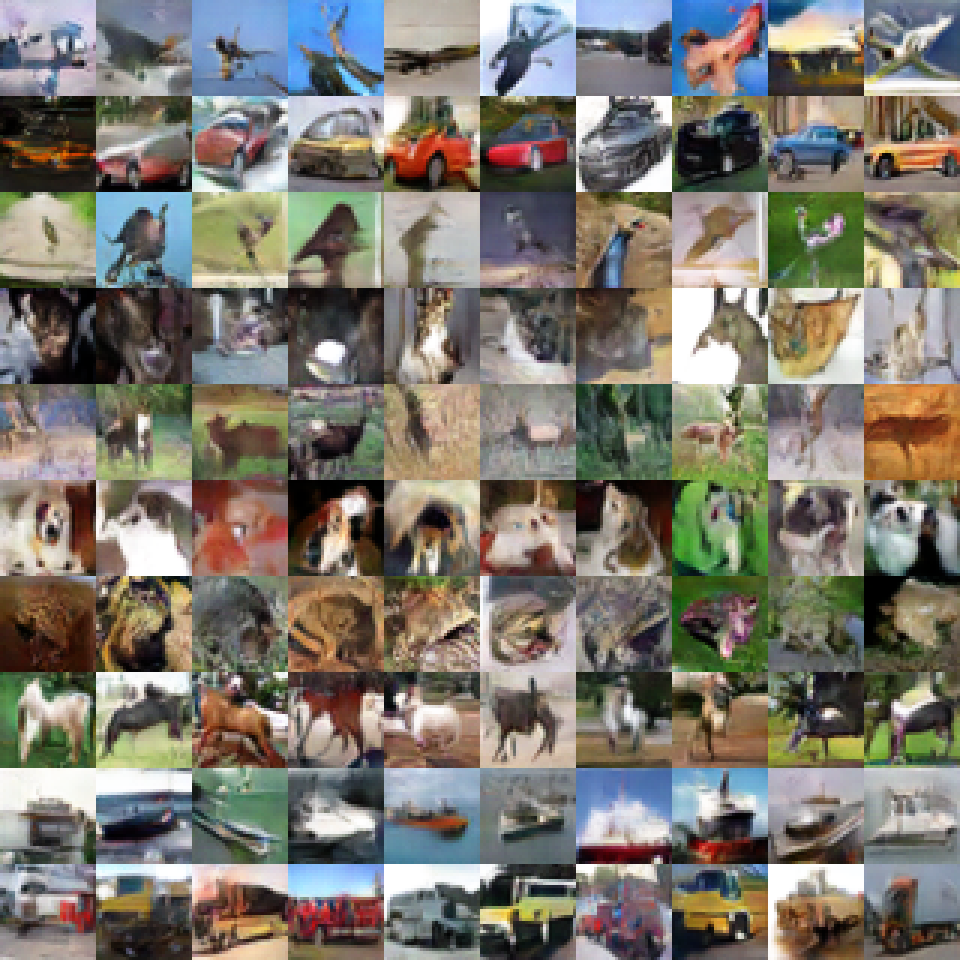}
	\vspace{-5pt}
	\caption{Random Samples of AC-GAN$^*$: Class Condition}
	\label{fig: ac-gan results predefined}
\end{figure}


\begin{figure}[t]
	\centering
	\vspace{-30pt}
	\centering
	\includegraphics[width=0.80\columnwidth]{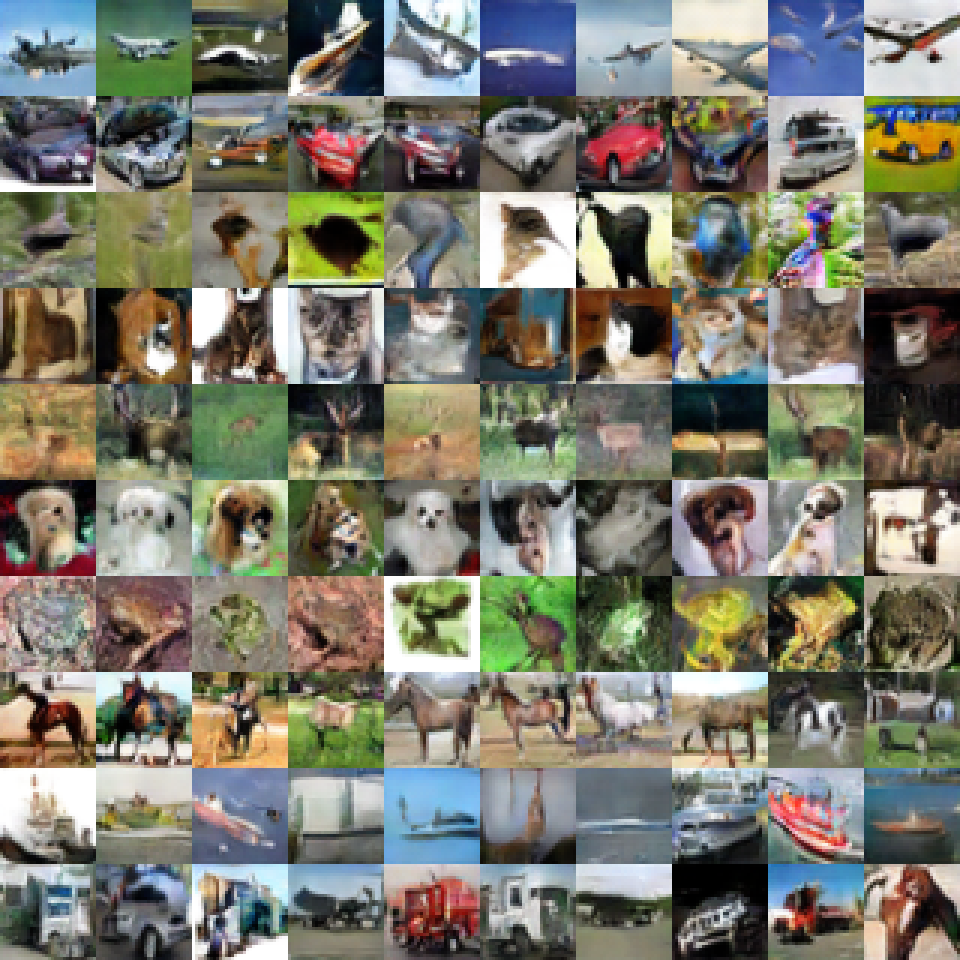}
	\vspace{-5pt}
	\caption{Random Samples of AC-GAN$^{*+}$: Dynamic Labeling}
\end{figure}

\begin{figure}[t]
	\centering
	\vspace{-10pt}
	\centering
	\includegraphics[width=0.80\columnwidth]{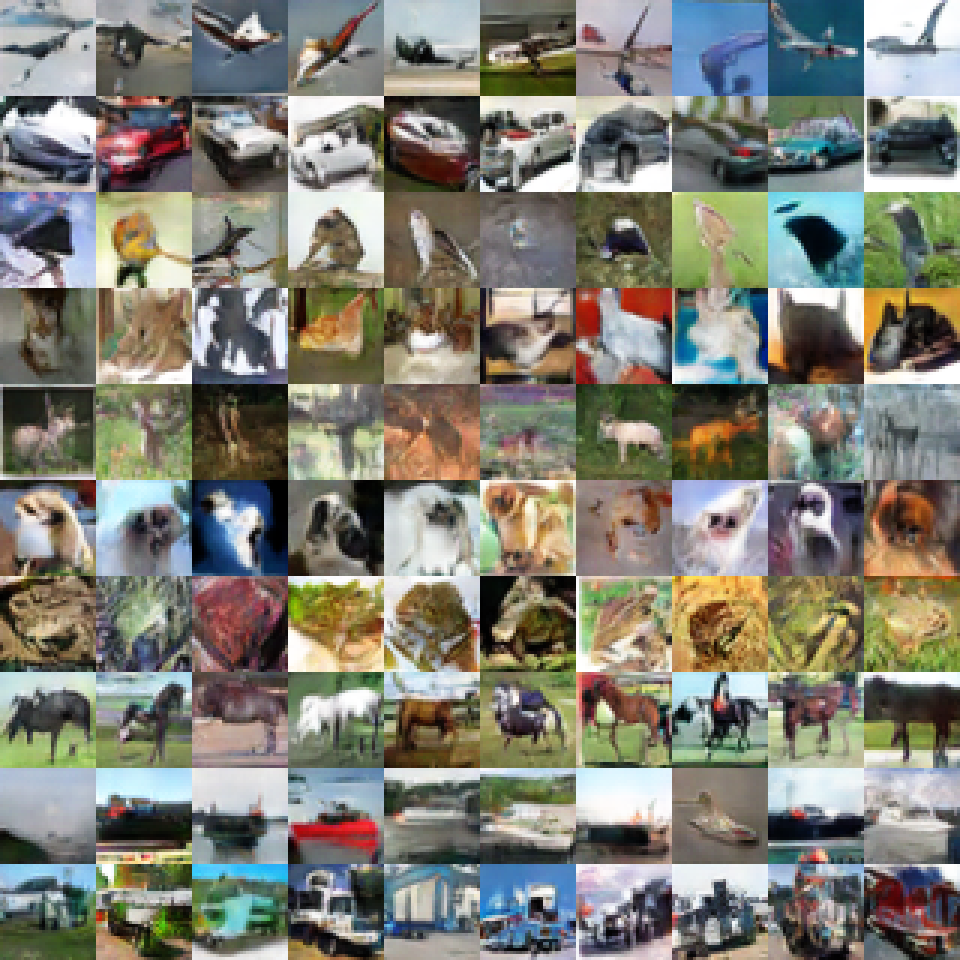}
	\vspace{-5pt}
	\caption{Random Samples of AC-GAN$^{*+}$: Class Condition}
\end{figure}

\begin{figure}[t]
	\centering
	\vspace{-30pt}
	\centering
	\includegraphics[width=0.80\columnwidth]{figures/visualresults/DY_AD.png}
	\vspace{-5pt}
	\caption{Random Samples of LabelGAN: Under Dynamic Labeling Setting }
\end{figure}

\begin{figure}[t]
	\centering
	\vspace{-10pt}
	\centering
	\includegraphics[width=0.80\columnwidth]{figures/visualresults/CC_AD.png}
	\vspace{-5pt}
	\caption{Random Samples of LabelGAN: Under Class Condition Setting }
\end{figure}

\begin{figure}[t]
	\centering
	\vspace{-30pt}
	\centering
	\includegraphics[width=0.80\columnwidth]{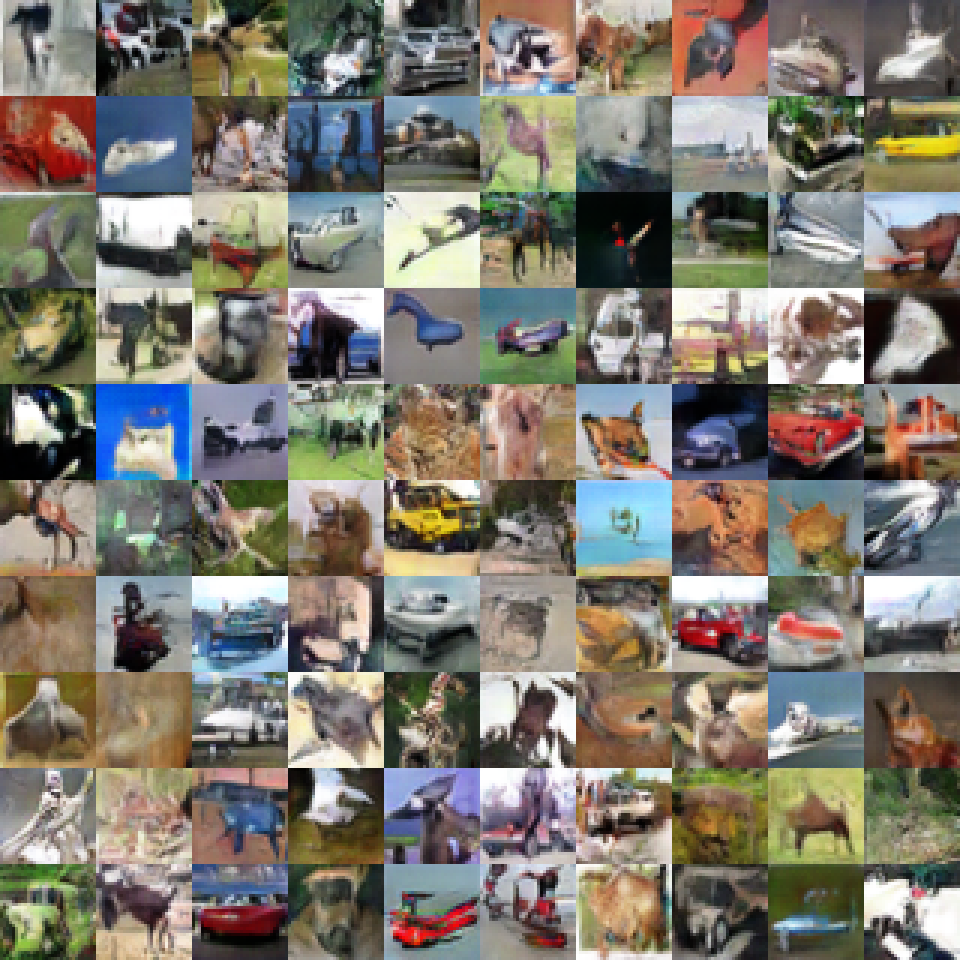}
	\vspace{-5pt}
	\caption{Random Samples of GAN: Under Dynamic Labeling Setting }
\end{figure}

\begin{figure}[t]
	\centering
	\vspace{-10pt}
	\centering
	\includegraphics[width=0.80\columnwidth]{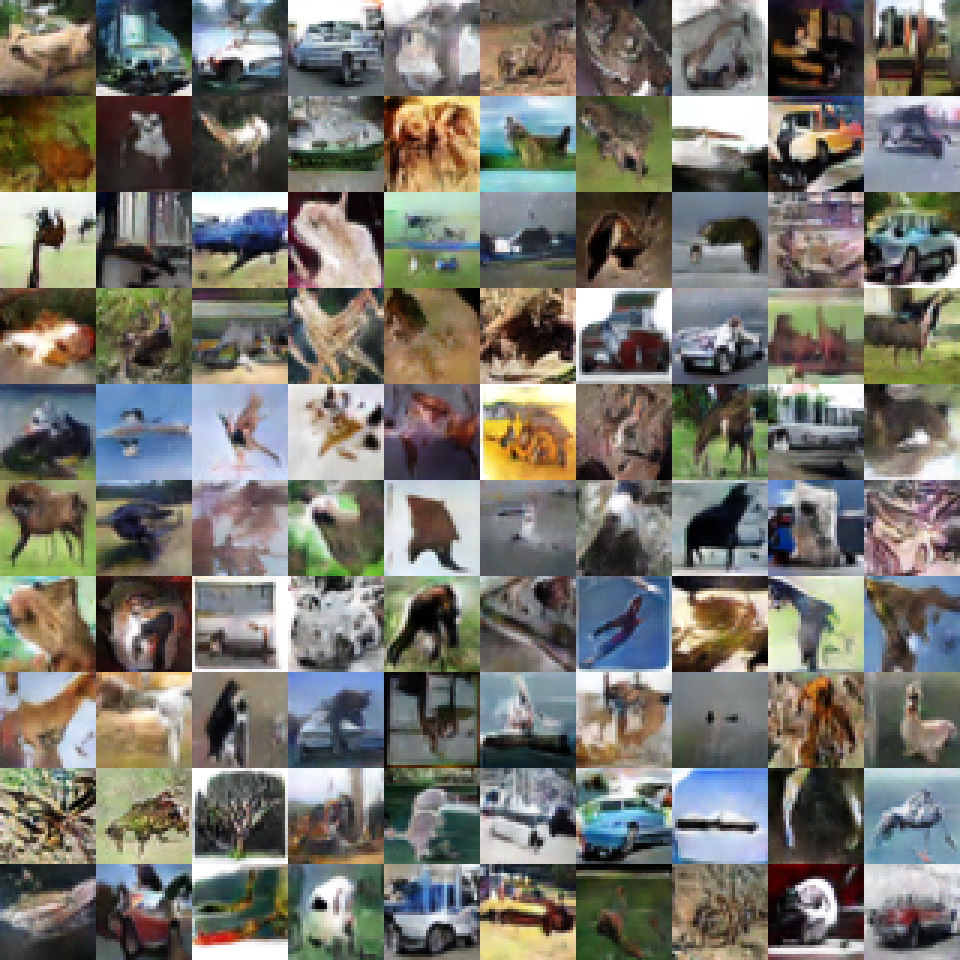}
	\vspace{-5pt}
	\caption{Random Samples of GAN: Under Class Condition Setting }
\end{figure}

\end{document}